\pgfplotsset{compat=newest}
\newtheorem{corollary}{Corollary}
\newtheorem{definition}{Definition}
\newcommand{\keypoint}[1]{{\bf #1}\quad}
\newtheorem{theorem}{Theorem}
\newtheorem{lemma}{Lemma}
\newcommand{\Rad}{\gR}
\newcommand{\supp}{\textup{supp}\,}
\def\eqref#1{equation~\ref{#1}}
\def\1{\bm{1}}
\def\ry{{\textnormal{y}}}
\def\rvsigma{{\bm{\sigma}}}
\def\rvx{{\mathbf{x}}}
\def\rvz{{\mathbf{z}}}
\def\ervsigma{\sigma}
\def\vw{{\bm{w}}}
\def\vx{{\bm{x}}}
\def\vz{{\bm{z}}}
\DeclareMathAlphabet{\mathsfit}{\encodingdefault}{\sfdefault}{m}{sl}
\SetMathAlphabet{\mathsfit}{bold}{\encodingdefault}{\sfdefault}{bx}{n}
\def\gA{{\mathcal{A}}}
\def\gD{{\mathcal{D}}}
\def\gF{{\mathcal{F}}}
\def\gG{{\mathcal{G}}}
\def\gP{{\mathcal{P}}}
\def\gR{{\mathcal{R}}}
\def\gX{{\mathcal{X}}}
\def\gY{{\mathcal{Y}}}
\newcommand{\ind}{\mathbf{1}}
\newcommand{\E}{\mathop{\mathbb{E}}}
\newcommand{\R}{\mathbb{R}}
\DeclareMathOperator*{\argmax}{arg\,max}
\DeclareMathOperator*{\argmin}{arg\,min}
\title{Strategic Classification with Randomised Classifiers}
\author{%
  Jack Geary\\
  School of Informatics\\
  University of Edinburgh\\
  Edinburgh\\
  United Kingdom\\
  \texttt{jack.geary@ed.ac.uk} \\
  \And
  Henry Gouk \\
  School of Informatics\\
  University of Edinburgh\\
  Edinburgh\\
  United Kingdom\\
  \texttt{henry.gouk@ed.ac.uk} \\
}
\begin{document}

\maketitle

\begin{abstract}
    We consider the problem of strategic classification, where a \emph{learner} must build a model to classify \emph{agents} based on features that have been strategically modified.
    Previous work in this area has concentrated on the case when the learner is restricted to deterministic classifiers. In contrast, we perform a theoretical analysis of an extension to this setting that allows the learner to produce a randomised classifier. We show that, under certain conditions, the optimal randomised classifier can achieve better accuracy than the optimal deterministic classifier, but under no conditions can it be worse. When a finite set of training data is available, we show that the excess risk of Strategic Empirical Risk Minimisation over the class of randomised classifiers is bounded in a similar manner as the deterministic case. In both the deterministic and randomised cases, the risk of the classifier produced by the learner converges to that of the corresponding optimal classifier as the volume of available training data grows. Moreover, this convergence happens at the same rate as in the i.i.d. case. Our findings are compared with previous theoretical work analysing the problem of strategic classification. We conclude that randomisation has the potential to alleviate some issues that could be faced in practice without introducing any substantial downsides.
\end{abstract}

\section{Introduction}
Classifiers built with machine learning can play a significant role in a number of resource allocation scenarios; universities determining what students to enrol for the coming year and banks deciding whether or not to give a customer a loan will rely on classification methods to determine the eligibility of candidates \citep{citron2014,milli2019}. In these settings, it is known that candidates can use information about the classifier to strategically alter how they represent themselves to the system, incurring some cost, with the aim of improving their classification. This is known as ``gaming" the classifier. The problem of learning classifiers in the presence of such gaming behaviour, known as Strategic Classification, is a growing area of research.

Strategic Classification models an interaction between a \emph{Learner}, who chooses and publicly discloses a classifier, and \emph{Agents} who are subject to classification \citep{hardt2016}.\footnote{In the literature the Learner and Agent roles are also referred to as ``Jury" and ``Contestant", respectively \citep{hardt2016}.} The Agents are each independently motivated to be positively classified and, knowing the publicly disclosed classifier, are empowered to alter their representations in order to be classified favourably. The Learner's goal is to choose a classifier that achieves the highest classification accuracy possible, conditioned on this gaming behaviour. Existing work in this area is restricted to the setting where the Learner must select a single classifier from a specified family of classifiers. This puts a heavy constraint on the Learner's options, and limits their ability to counteract the Agents' strategic behaviour.

We argue that, from the modelling point of view, the Learner should instead construct a classifier that incorporates randomness. That is, instead of identifying a single classifier, the Learner should optimise a distribution over classifiers. Under our proposed framework, each Agent would be classified by first observing their associated features, then independently sampling a classifier according to the distribution and using it to make a prediction. A key component of our argument is that the optimal randomised classifier can outperform the optimal deterministic classifier in some cases, but the reverse is never true. The intuition behind this is that when a Learner uses a randomised classifier, the Agents will not know which classifier they should game and therefore what strategy should be employed. Moreover, we show that one does not pay a penalty (in terms of sample complexity) when training randomised classifiers.

In summary, our perspective on the problem and the theoretical analysis provides the following contributions:
\begin{itemize}
    \item We identify a small set of sufficient conditions that characterise when one should expect the optimal randomised classifier to outperform the optimal deterministic classifier, as measured by the risk on strategically perturbed data points.
    \item We derive bounds on the excess risk of the Strategic Empirical Risk Minimisation (SERM) introduced by \citet{levanon2021} in the case where it is used on the class of randomised classifiers. The resulting bound demonstrates that the performance of randomised classifiers trained with SERM converges towards the optimal risk at the same rate as the conventional SERM that returns a deterministic classifier.
    \item In the process of deriving excess risk bounds for randomised classifiers obtained via SERM, we also produce slightly improved bounds for the deterministic case.
\end{itemize}

\section{Related Work}
\label{related_work}
\keypoint{Strategic Classification} The literature in this area primarily builds upon the problem structure and nomenclature established by \cite{hardt2016}. However, earlier works such as \cite{dalvi2004} and \cite{bruckner2011} show that efforts to address the problem predate this. In their work, \citeauthor{hardt2016} established the convention of the Agent with some state that they will strategically manipulate, subject to cost constraints, in order to obtain a favourable classification from some publicly disclosed classifier deployed by the Learner. In the same work, \citeauthor{hardt2016} proposed an algorithm that could solve this problem, under the assumption of a separable cost function. Subsequent literature has proposed solutions that weaken this assumption (e.g., \cite{miller2020, eilat2022}). Other works propose an alternative formulation which does not explicitly rely on the cost, $c$, but instead introduces the concept of a manipulation graph to define the set of feasible states \cite{zhang2021, lechner2022, lechner2023}. In contrast with these works, this paper generalises the definition of the classifier to allow for randomisation.

\keypoint{Modelling Uncertainty} \cite{ghalme2021} and \cite{cohen2024} explore variants of the conventional Strategic Classification formulation where the classifier is presumed to be unknown to the Agents, and must be inferred. Both instances use distributions to capture the Agents' beliefs about the ``true" classifier; \citeauthor{cohen2024} model the Agents as maintaining a belief over possible classifier definitions. The Learner can then shape the information they reveal about the classifier to the Agents in order to control their ability to game, with the goal of maximising accuracy. \citeauthor{ghalme2021} instead explore the case where the classifier is not revealed to the Agents, and so they have to approximate it from observation data about the classifier's behaviour. The authors demonstrate that, under certain assumptions, not revealing classifier definition can result in considerable accuracy losses for the Learner. Unlike in \cite{ghalme2021} and \cite{cohen2024}, where distributions are only used to capture the Agents' uncertainty over the classifier chosen by the Learner, in this work we model the problem such that the distribution is what is chosen by the Learner.

\keypoint{Randomised Classifiers} Prior work has not provided a general investigation into the idea of learning randomised classifiers for strategic settings; \cite{braverman2020} investigate the behaviour of randomised linear classifiers for one dimensional real-valued feature spaces---i.e., threshold functions. \cite{sundaram2023} provide an example of a distribution defined on $\R^2$ where a randomised linear classifier will outperform the optimal deterministic classifier. However, the remainder of their work is on the sample complexity of learning algorithms that produce deterministic models. Neither of these pieces of work consider the sample complexity of learning randomised classifiers. Our Theorem \ref{thm:randomised_optimality_conditions} can be seen as a substantial generalisation of the claims about randomisation made in these works; In contrast to \citet{braverman2020} and \citet{sundaram2023}, our result applies to arbitrary hypothesis classes (not just linear models), and does not rely on constructing specific data distributions or specific Euclidean spaces; we identify a small set of sufficient conditions and allow features to come from any measurable space. Moreover, our results can be seen as a generalisation of the work of \cite{pinot2020} beyond a zero-sum adversarial robustness setting.

\keypoint{Learning Theory} PAC Learning methods \citep{valiant1984} can be used to produce bounds on how well a classifier trained on a fixed dataset would be expected to generalise to the whole population distribution from which the dataset was sampled. \citet{zhang2021, sundaram2023, cullina2018} are examples of just a few Strategic Classification papers that have used PAC Learning methods to establish such bounds. The key difference between our work and these prior works is that we focus on the novel setting where the Learner selects a distribution over classifiers, rather than a single deterministic classifier. In the case when a distribution over hypotheses is being learned, these conventional PAC-Learning tools cannot be applied. As a consequence, we derive new results that allow us to quantify the rate at which the performance of models trained using SERM converge towards the optimal risk.

\section{Strategic Classification with Randomisation}
\label{sec:strategic_classification}
Throughout this paper we will use $\gP(\gA)$ to denote the set of probability measures over some measurable space, $\gA$. Given a data distribution $\gD \in \gP(\gX\times \gY)$, where $\gX$ is a feature space and $\gY = \{-1,1\}$, and a family of classifiers, $\gF$, that map from $\gX$ to $\gY$, the goal in the i.i.d. learning setting is to identify a function $f \in \gF$ that minimises the risk,
\begin{equation}
    R(f) = \E_{(\rvx, \ry) \sim \gD}\left[l(f(\rvx), \ry)\right],
    \label{eqn:risk}
\end{equation}
induced by some loss function $l: \R \times \{-1,1\} \rightarrow \mathbb{R}^+$. The distribution, $\gD$, is typically assumed to be unknown, so the choice of classifier, $f$, is determined through the use of a training set $S = \{(\rvx_{i}, \ry_{i})\}_{i=1}^{n}$, where $(\rvx_{i}, \ry_{i})$ are i.i.d. samples from $\gD$. This set is used to define the empirical risk,
\begin{equation}
    r(f) := \frac{1}{n}\sum_{i=1}^{n}l(f(\rvx_{i}), \ry_{i}).
    \label{eqn:empirical_risk}
\end{equation}
Unless stated otherwise, in this work we choose $l$ to be the zero--one error, $l(\hat{y}, y) = \ind[\hat{y} \neq y]$, where $\ind$ is the indicator function that evaluates to one if the argument is true and zero otherwise.

The strategic classification problem \citep{hardt2016} differs from the conventional i.i.d. learning setting in that the distribution of data used to select a classifier from $\mathcal{F}$ is different to the distribution encountered at test time. In particular, at test time it is assumed that agents with knowledge of the chosen classifier will strategically modify features according to some cost model in order to obtain positive classifications. This interaction is modelled as a Stackelberg Game between a Learner player and an unknown number of Agent players, with the Learner as the leader \citep{stackelberg1934}. The Learner player chooses a classifier, $f$, to classify the Agents. The Agents observe $f$ and, in response, attempt to ``game'' the classifier by independently perturbing their features, $\Delta_{f}(\rvx)$, with the aim of being classified as the positive class. Concretely, the Agents optimise a utility,
\begin{equation}
    \Delta_{f}(\vx) \in \text{BR}(f) := \argmax_{\vz \in \gX} f(\vz) - c(\vx,\vz),
    \label{eqn:strategic_response}
\end{equation}
where $\text{BR}(f)$ denotes the set of functions that act as best responses to $f$ according to the Agent, and $c: \mathcal{X} \times \mathcal{X} \rightarrow \mathbb{R}^{+}$ is a non-negative function quantifying the cost incurred by the Agent to alter their features. As is typical in the literature, we assume the positive classification is the desired outcome for all Agents and that all Agents use the same cost function, which is also typically assumed to be known to the Learner. Agents are modelled as being rational, so if the Agent is already positively classified ($f(\vx)=1$), then $\Delta_{f}(\vx) = \vx$.

As in the standard i.i.d learning problem, the goal is to identify a classifier, $f \in \gF$, that minimises the strategic risk over an unknown data distribution, $\gD$. Given the Agents' gaming strategy, $\Delta$, the strategic risk is defined as
\begin{equation}
    R_{\Delta}(f) = \E_{(\rvx,\ry) \sim \mathcal{D}}[l(f(\Delta(\rvx)), \ry)],
    \label{eqn:strategic_risk}
\end{equation}
and the empirical strategic risk on the training set, $S$, is given by
\begin{equation}
    r_{\Delta}(f) = \frac{1}{n}\sum_{i=1}^{n}l(f(\Delta(\rvx_{i})),\ry_{i}).
    \label{eqn:empirical_strategic_risk}
\end{equation}
The idealised objective for the Learner is therefore to solve a bi-level optimisation problem,
\begin{equation}
    f^\ast = \argmin_{f \in \gF} R_{\Delta_f} (f),
\end{equation}
where the lower level of the problem arises from the definition of $\Delta_f$. Conventional approaches to this problem approximate the solution of this via a variant of empirical risk minimisation that takes into account the bi-level structure of the optimisation problem \citep{hardt2016, levanon2021, levanon2022}. This idea has become known as Strategic Empirical Risk Minimisation (SERM) \citep{levanon2021}, and we denote the model obtained via this method by
\begin{equation}
    \hat{f} = \argmin_{f \in \gF} r_{\Delta_f}(f).
\end{equation}

\subsection{Generalising to Randomised Classifiers}
\label{sec:mixed_strategies_and_cogameability}
In the conventional strategic classification problem formulation, the Learner commits to using a single classifier from $\gF$ to make all predictions at test time. We propose that the Learner instead commit to a distribution over classifiers, $Q \in \gP(\gF)$. When classifying each Agent's features at test time, the Learner samples a classifier according to this distribution and then uses it to make a prediction. Crucially, a new classifier will be sampled each time a prediction is to be made. This type of randomised classifier is sometimes known as a Gibbs classifier in the machine learning community (e.g., \citet{ng2001convergence}).

As a result of the uncertainty in the classification outcome introduced by the randomisation in this formulation, the Agents' objective is revised to optimise the expected utility\footnote{See, e.g., \citet{berger2013} or \citet{maschler2020} for discussions on why this is justified.},
\begin{equation}
    \Delta_Q(\vx) = \argmax_{\vz \in \gX} \E_{f \sim Q}[f(\vz)] - c(\vx, \vz).
    \label{eqn:randomised_best_response}
\end{equation}
The strategic risk and its empirical counterpart are therefore generalised to
\begin{equation}
    R_\Delta(Q) = \E_{f \sim Q} \E_{(\rvx, \ry) \sim D}[l(f(\Delta(\rvx)), \ry)]
    \label{eqn:randomised_strategic_risk}
\end{equation}
and
\begin{equation}
    r_\Delta(Q) = \E_{f \sim Q} \left[ \frac{1}{n} \sum_{i=1}^n l(f(\Delta(\rvx_i)), \ry_i) \right],
\end{equation}
respectively, and the optimal randomised classifier, $Q^{\ast}$ solves
\begin{equation}
    Q^{\ast} = \argmin_{Q \in \gP(\gF)} R_{\Delta_{Q}}(Q).
\end{equation}
Similar to the deterministic case, we can also define the SERM solution for the randomised classifier setting,
\begin{equation}
    \hat{Q} = \argmin_{Q \in \gP(\gF)} r_{\Delta_Q}(Q).
\end{equation}
We note here that the optimal randomised classifier, as we have defined it, can assign all of the probability mass to a single element of $\gF$---including the optimal deterministic classifier. This means that the optimal randomised classifier can never perform worse than the optimal deterministic classifier. In this sense, our problem formulation is a strict generalisation of the conventional strategic learning problem.

\section{Comparing Optimal Classifiers}
\label{sec:comparing_optimal_classifiers}
We begin by determining when the optimal randomised classifier could outperform the optimal deterministic classifier. This allows us to avoid additional complications that can arise from the imperfect information situation encountered when learning from a finite dataset. Our goal is to identify a set of sufficient conditions that could plausibly arise in a real problem and that lead to the optimal randomised classifier provably outperforming the optimal deterministic classifier. 

\subsection{Sufficient Conditions}
The standard strategic classification setting assumes that there exists some classifier, $h \in \mathcal{F}$, according to which labels are generated using unperturbed data points \citep{hardt2016}. If $h$ is also incentive compatible (i.e, $\forall \vx \in \supp(\gD), h(\Delta_h(\vx)) = h(\vx)$), then $h = f^\ast$. In this situation it is possible that a learning rule mapping training sets to deterministic classifiers in $\gF$ can be optimal, because $h$ is in the hypothesis class associated with our learning rule and achieves a strategic risk of zero. As such, the first condition we identify is quite trivial: for the optimal randomised classifier to strictly improve upon $f^\ast$, it must be the case that $f^\ast$ has non-zero strategic risk.

The second condition we identify is the non-uniqueness of $f^\ast$. We therefore define $\mathcal{F}^{\ast}$ to be the subset of $\mathcal{F}$ containing models that are optimal with respect to the strategic risk,
\begin{equation}
    \mathcal{F}^{\ast} = \argmin_{f \in \mathcal{F}} R_{\Delta_{f}}(f).
\end{equation}
For convenience, we will refer to the optimal strategic risk as $R_{\Delta}^{\ast}$, rather than selecting a specific element $f^\ast \in \gF^\ast$ and writing $R_{\Delta_{f^\ast}}(f^\ast)$.

Before providing the remaining conditions, we consider why randomisation could reduce strategic risk at an intuitive level, and then introduce notation to enable formalisation of this intuition. In essence, randomisation allows the Learner to deter gaming behaviour by utilising different classifiers that force some subset of the Agents to have to choose which ones to game. If the Learner randomly selects which classifier to use to make each classification,Agents that cannot simultaneously game all classifiers will either commit to game only a subset of them, or decide that the cost of gaming only a subset outweighs the smaller chance of achieving a positive classification.

With this in mind we define the set of points that would attempt to game a classifier, $f \in \mathcal{F}$, as
\begin{equation}
    G_{f} = \{\vx: \exists \vz, c(\vx, \vz)<2 \land f(\vx)=-1 \land f(\vz)=1\}.
\end{equation}
This set can be partitioned into those points for which it is cheap to game $f$, and the remaining points for which it is expensive to game $f$. We define the points that can ``cheaply'' game $f$ as those points in $G_f$ that are able to game $f$ for a cost less than 1,
\begin{equation}
    C_{f} = \{\vx: \exists \vz, c(\vx, \vz)< 1 \land f(\vx)=-1 \land f(\vz)=1 \},
\end{equation}
with the points that ``expensively'' game $f$ given by
\begin{equation}
    E_{f} = G_{f} \oplus C_{f},
    \label{eqn:expensive_gaming}
\end{equation}
where $\oplus$ is the symmetric difference between sets ($A \oplus B = A \cup B - A \cap B$). Our next sufficient condition examines the points that require substantial resources to game a single classifier; this condition encodes the idea that, within this set, more probability mass should be assigned to the negative class than the positive class,
\begin{equation}
    P(\ry=1, \rvx \in E_f \oplus E_{f^\prime}) < P(\ry=-1, \rvx \in E_f \oplus E_{f^\prime}).
    \label{eqn:expensive_game_inequality_condition}
\end{equation}

We now consider points that are able to game both classifiers; by generalising the definition of $G_{f}$, we define the set of points that can simultaneously game two distinct classifiers, $f, f^\prime \in \gF$, as
\begin{equation}
    G_{f,f^\prime} = \{ \vx: \exists \vz, c(\vx, \vz)<2 \land f(\vx) = f^\prime(\vx)=-1 \land f(\rvz)=f^\prime(\rvz)=1\}.
\end{equation}
We use this to identify the set of points that can game both $f$ and $f^\prime$, but cannot do so simultaneously,
\begin{equation}
    N_{f,f^\prime} = \{\vx : \vx \in G_f \cap G_{f^\prime} \land \vx \notin G_{f,f^\prime}\}.
\end{equation}
This allows us to state our last sufficient condition,
\begin{equation}
    P(\ry = 1, \rvx \in N_{f, f^\prime}) < P(\ry = -1, \rvx \in N_{f, f^\prime}).
    \label{eqn:cogame_inequality_condition}
\end{equation}

Combining our sufficient conditions together we get the following theorem.

\begin{restatable}{theorem}{sufficientconditions}
    If $R_\Delta^\ast > 0$ and there exists $f, f^\prime \in \gF^\ast$ such that
    \begin{equation*}
        P(\ry=1, \rvx \in E_f \oplus E_{f^\prime}) \leq P(\ry=-1, \rvx \in E_f \oplus E_{f^\prime})
    \end{equation*}
    and
    \begin{equation*}
        P(\ry = 1, \rvx \in N_{f, f^\prime}) \leq P(\ry = -1, \rvx \in N_{f, f^\prime}),
    \end{equation*}
    then, so long as at least one of the inequalities is strict, we have
    \begin{equation*}
        R_{\Delta_{Q^\ast}}(Q^\ast) < R_{\Delta}^{\ast}.
    \end{equation*}
    \label{thm:randomised_optimality_conditions}
\end{restatable}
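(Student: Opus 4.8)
The plan is to prove the theorem by exhibiting a single randomised classifier that already beats the optimal deterministic risk; since $Q^\ast$ minimises over all of $\gP(\gF)$, it must then beat it too. The natural candidate is the even mixture of the two given optima, $Q_0 = \tfrac12\delta_f + \tfrac12\delta_{f'}$ with $f,f'\in\gF^\ast$. Because a point mass recovers any deterministic classifier, optimality of $Q^\ast$ gives $R_{\Delta_{Q^\ast}}(Q^\ast)\le R_{\Delta_{Q_0}}(Q_0)$, so it suffices to establish $R_{\Delta_{Q_0}}(Q_0) < R_\Delta^\ast$. The hypothesis $R_\Delta^\ast>0$ guarantees that the optimal deterministic classifiers misclassify a set of positive measure, so that the regions on which randomisation can help are non-null and the eventual strict inequality has something to bite on.

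First I would turn the comparison into an integral of a pointwise gap. Using $f,f'\in\gF^\ast$, write $R_\Delta^\ast = \tfrac12 R_{\Delta_f}(f) + \tfrac12 R_{\Delta_{f'}}(f')$, and expand the randomised risk as $R_{\Delta_{Q_0}}(Q_0) = \tfrac12\,\E[l(f(\Delta_{Q_0}(\rvx)),\ry)] + \tfrac12\,\E[l(f'(\Delta_{Q_0}(\rvx)),\ry)]$. Subtracting, the claim reduces to $\E_{(\rvx,\ry)}[D(\rvx,\ry)] > 0$, where $D$ is the per-agent difference between its contribution to $R_\Delta^\ast$ and to $R_{\Delta_{Q_0}}(Q_0)$, so that the entire argument is reduced to understanding how each agent's classification outcome changes when the deterministic best response is replaced by the mixture best response.

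Next I would analyse the agents' response to $Q_0$. Since $\E_{g\sim Q_0}[g(\vz)] = \tfrac12(f(\vz)+f'(\vz)) \in \{-1,0,1\}$, an agent starting from value $-1$ gains $2$ by reaching a point where $f$ and $f'$ agree positively, and gains only $1$ by reaching a point where exactly one of them is positive; these moves are worthwhile precisely when their cost is below $2$ and below $1$ respectively. This is exactly the calibration behind the thresholds appearing in $G_f,\,C_f,\,G_{f,f'}$, and it lets me partition $\supp(\gD)$ by the sign pattern $(f(\vx),f'(\vx))$ together with the agent's gaming capability. I would then verify that the agents whose outcome actually changes lie in $E_f\oplus E_{f'}$ or in $N_{f,f'}$, and that on each such region $D$ equals a strictly positive multiple of $\ind[\ry=-1]-\ind[\ry=1]$ (a factor $\tfrac12$ when randomisation turns a sure prediction into a $50/50$ outcome, and $1$ when it converts a sure positive back into a sure negative). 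Integrating and invoking the two hypothesised inequalities—each bounding the positive-label mass of its region below the negative-label mass—then gives $\E[D]\ge 0$, with strictness inherited from whichever inequality is strict.

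The hard part is this best-response case analysis, and specifically reconciling it with the \emph{unweighted} probability inequalities in the statement. The subtlety is that the mixture can induce \emph{partial} gaming: an agent who deterministically games both $f$ and $f'$ to a sure positive may, under $Q_0$, prefer a move to a single-positive $50/50$ point, or may decline to move at all, and these two behaviours attach different magnitudes ($\tfrac12$ versus $1$) to $D$. The argument must therefore group the affected agents so that each stated inequality controls a sub-region on which $D$ carries a single sign and a single magnitude, and must confirm that the partition into $E_f\oplus E_{f'}$ and $N_{f,f'}$ is genuinely exhaustive—that no changing agent (in particular none that is positive under exactly one of $f,f'$, that expensively games both without being able to game them simultaneously, or that can reach an agreement-positive point only at a cost far exceeding a single-positive point) escapes these two sets. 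Establishing exhaustiveness and uniform-sign grouping, and pinning down exactly where the optimality of $f$ and $f'$ must be invoked to close the accounting, is where the real work lies.
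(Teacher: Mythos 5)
Your plan follows the same route as the paper's own proof: take the uniform mixture $Q_0$ over the two optimal classifiers, use optimality of $Q^\ast$ to reduce everything to showing $R_{\Delta_{Q_0}}(Q_0) < R_\Delta^\ast = \tfrac12 R_{\Delta_f}(f) + \tfrac12 R_{\Delta_{f'}}(f')$, calibrate the agents' incentives (gain $2$ for reaching an agreement-positive point, gain $1$ for a single-positive point, hence the thresholds in $G_f$, $C_f$, $G_{f,f'}$), and land on a risk gap of the form $\tfrac12\bigl[P(\rvx \in E_f \oplus E_{f'}, \ry=-1) - P(\rvx \in E_f \oplus E_{f'}, \ry=1)\bigr] + \bigl[P(\rvx \in N_{f,f'}, \ry=-1) - P(\rvx \in N_{f,f'}, \ry=1)\bigr]$, which is exactly the paper's final equation up to a factor of $2$. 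So the strategy, the candidate, and the anticipated structure of the bound are all correct.

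The gap is that you stop precisely where the proof actually is: the claim that the affected agents lie exhaustively in $E_f \oplus E_{f'}$ or $N_{f,f'}$, each carrying a single signed magnitude, is announced but never established. The paper does this work by decomposing each of $R_{\Delta_f}(f)$, $R_{\Delta_{f'}}(f')$ and $2R_{\Delta_{Q_0}}(Q_0) = R_{\Delta_{Q_0}}(f) + R_{\Delta_{Q_0}}(f')$ into clean risk plus signed corrections over gaming regions (so the clean risks cancel), asserting that under $\Delta_{Q_0}$ the classifier $f$ is gamed exactly on $G_{f,f'} \cup (C_f \cap G_{f'}^{c})$, and proving a set identity $P(\rvx \in E_f, \rvx \notin G_{f'}) + P(\rvx \in E_{f'}, \rvx \notin G_f) = P(\rvx \in E_f \oplus E_{f'})$ to reach the symmetric-difference form. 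Note also that the ``partial gaming'' worry you raise is not a phantom, and is in fact the weakest point of the paper's own case analysis: an agent with $\rvx \in C_f \cap G_{f'} \setminus G_{f,f'}$ (cheap to game $f$, expensively gameable $f'$, not simultaneously gameable) lies in $N_{f,f'}$, and under the mixture it strictly prefers gaming $f$ alone (utility $0 - c > -1$) to staying put, so it contributes magnitude $\tfrac12$ rather than $1$; the paper's dichotomy instead classifies such agents as not gaming at all and assigns all of $N_{f,f'}$ a uniform magnitude. Completing your plan therefore requires exactly the finer grouping you call for --- splitting $N_{f,f'}$ into its cheaply and expensively gameable parts and checking what the unweighted hypothesis on $N_{f,f'}$ then buys --- and until that case analysis is written down and shown exhaustive, the step $\E[D] > 0$ remains unproven.
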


The proof of this theorem is deferred to Appendix \ref{app:randomised_optimality_conditions_proof}, but Figure \ref{fig:randomised_gaming_behaviour} provides some geometric intuition based on our proof technique. We have a uniformly distributed ball of positively labelled points (green), surrounded by a uniformly distributed disc of negatively labelled points (red). Let $\mathcal{F}$ be the set of quadratic classifiers, and $\gF^\ast \subseteq \gF$ a set of classifiers that satisfy the same rotational symmetry as the data distribution. The two middle parts of the Figure depict the gaming behaviour that can be applied to two such classifiers, $f, f^\prime \in \mathcal{F}^\ast$. The yellow regions identify positively classified points, while the blue region identifies points that will game the classifiers to receive a positive classification. We observe that some of the points in the $y=1$ region lie outside of the decision boundary, but within the region where gaming is feasible, meaning they will still end up being classified correctly. We refer to this as \textit{positive gaming}. However, these classifiers are also vulnerable to gaming in the $y=-1$ region, increasing risk, which we refer to as \textit{negative gaming}.

Figure \ref{fig:randomised_gaming_behaviour} (Right) presents the case for the randomised classifier resulting from uniformly sampling over $f$ and $f'$ ($Q = U(\{f,f'\})$). We observe that a consequence of randomisation is that some regions where a deterministic classifier would be gamed become too expensive to game for the randomised classifier ($E_{f}$ and $E_{f'}$ in Theorem \ref{thm:randomised_optimality_conditions}). Therefore the gaming regions highlighted in Figure \ref{fig:randomised_gaming_behaviour} (Right) are half the width of those in diagrams associated with the deterministic case. We observe that randomisation has reduced the incidence of positive gaming (green cross-hatched region in Figure \ref{fig:randomised_gaming_behaviour} (Right)) as well as the incidence of negative gaming (red cross-hatched region in Figure \ref{fig:randomised_gaming_behaviour} (Right)). We note that the majority of area where gaming occurs is represented with lower opacity; this is to indicate that, due to the randomisation, there is only a 50\% chance of successfully gaming $Q$.

\subsection{Comparison with Prior Work}
Previous works exploring randomised classifiers in the context of Strategic Classification have relied on overly conservative conditions that constrain the generalisability of their results \citep{braverman2020, sundaram2023}. Namely, they have constructed specific problem instances for linear classifiers in one and two dimensional Euclidean spaces, respectively, where randomised classifiers can outperform deterministic classifiers. In contrast, our analysis has shown that an optimal randomised classifier can outperform an optimal deterministic classifier under a small set of sufficient conditions. In particular, we make no assumption on the type of classifier employed by the Learner or the topology of the space in which the features lie. This significantly broadens the space of problems to which randomised classifiers could potentially be applied compared to the conditions explored in prior work.

\begin{figure}
\centering
\input{figure}
\caption{Comparing gaming behaviour for two deterministic classifiers, $f$ and $f^\prime$, and a randomised classifier defined as a uniform distribution over $f$ and $f^\prime$. Points to be classified are in a circular positive class region (green), surrounded by a negative class disc (red). Classes are uniformly distributed ($P(y=-1)=P(y=1)$) and data are uniformly distributed within each region. \textit{(Left)} Quadratic classifiers, $f,f^\prime \in \gF^\ast$. \textit{(Middle)} Highlighting $G_{f}$ (blue), the region around $f$ where gaming is possible; subfigures depict $G_{f}, G_{f'}$ for $f, f' \in \gF^\ast$. \textit{(Right)} Highlighting the region where gaming is possible for the randomised classifier. Reduced opacity indicates reduced utility from gaming due to randomisation. The red and green cross-hatched areas identify $\{x \in E_{f}, y=-1\} \cup \{x \in E_{f'}, y=-1\}$ and $\{x\in E_{f}, y=1\} \cup \{x \in E_{f'}, y=1\}$ respectively.}
\label{fig:randomised_gaming_behaviour}
\end{figure}

\subsection{When are the Sufficient Conditions Satisfied?}
The first condition---the optimal risk being non-zero---is a common occurrence even for the standard i.i.d. setting. There are two main causes for this: (i) the hypothesis class does not contain decision boundaries of the correct shape (e.g., linear classifiers require linearly separable data); and (ii) the information in the features does not fully determine the label. We argue the second condition---multiple classifiers achieve the optimal strategic risk---is not unrealistic. If there is redundancy in the feature space, one might expect that different optimal classifiers will leverage different subsets of features. In this case, modifying features in one subset will game one classifier but not the other. Modifying features in both subsets would result in the Agent incurring a higher cost. Finally, the remaining conditions assert that points in the negative class should be more likely to game than those in the positive class. This a natural secondary objective that the Learner should be optimising throughout the broader design of the decision making process; we argue that the engineered feature space, chosen hypothesis class, and training process will naturally encourage this.

\subsection{Is Randomisation Appropriate in Practice?}
It is well known that Strategic Classification can motivate the development of classifiers that disadvantage people who do not want to game, or whose circumstances do not allow them to \cite{milli2019, hu2019}. This can arise where a Learner must choose between deploying a zero-risk classifier which is not incentive compatible (and so is vulnerable to gaming), and a classifier that has non-zero risk but is incentive compatible. Deploying the latter would result in Agents having no incentive to game, but the Learner would also be knowingly misclassifying some Agents in order to prevent the gaming behaviour. However, deploying the former effectively obliges Agents to consider gaming. In the case where the classifiers have disjoint best responses, Theorem \ref{thm:randomised_optimality_conditions} suggests that randomisation over the these classifiers could effectively disincentivise gaming without sacrificing performance. While the idea of randomness being of social benefit is counter-intuitive at first, we note that our work is not the first to suggest this. \cite{kilbertus2020fair} identify that using randomisation in similar settings to those considered in the Strategic Classification literature (e.g., loan applications) can result in more fair decisions.

\section{Generalisation of Randomised Classifiers}
Having shown that optimal randomised classifiers can outperform optimal deterministic classifiers, we now demonstrate that the gap in performance between the randomised classifier solution realised by SERM, $\hat{Q}$, and the optimal randomised classifier, $Q^{\ast}$, can be upper bounded in a similar manner to the deterministic case. This implies that the risk of a randomised classifier converges to that of the optimal randomised classifier as the data volume grows, making learning over this space viable from a statistical point of view.

Let us define the set of classifiers in $\mathcal{F}$ composed with the loss function, $l$, as
\begin{equation}
    \gF^l = l \circ \gF = \{(\vx, y) \mapsto l(f(\vx),y): f \in \gF\}.
\end{equation}
We can further extend this definition to be composed with a response function, $\Delta$, as
\begin{equation}
    \gF^l_{\Delta} = \gF^l \circ \Delta = \{(\vx,y) \mapsto f^l(\Delta(\vx), y): f^l \in \gF^l \}.
\end{equation}
We denote the loss class of randomised classifiers defined in terms of distributions over $\gF$ as
\begin{equation}
    \tilde{\gF}^l = \left\{ (\vx, y) \mapsto \E_{f \sim Q}[l(f(\vx), y)] : Q \in \gP(\gF) \right\}.
\end{equation}
Finally, we introduce a standard measure used in the literature when bounding generalisation; the Rademacher Complexity.
\begin{definition}[Rademacher Complexity]
The Rademacher Complexity of a class $\gG$ on a sample of $n$ independent random variables distributed according to $\gD$ is defined as
\begin{equation*}
    \Rad_n(\gG) = \E_{\rvz_{1:n} \sim \gD^n} \E_{\rvsigma}\left [\sup_{g \in \gG} \frac{1}{n} \sum_{i=1}^{n} \ervsigma_{i} g(\rvz_i) \right],
\end{equation*}
where $\rvsigma$ is a vector of independent Rademacher random variables, $\Pr(\ervsigma_i = 1) = \Pr(\ervsigma_i = -1) = \frac{1}{2}$.
\end{definition}
When $\gG$ is a loss class, such as $\gF^l$, then each $\rvz_i$ will be a tuple, $(\rvx_i, \ry_i)$. Whereas, when $\gG$ represents only a hypothesis class, such as $\gF$, then one should understand that $\rvz_i = \rvx_i$.

We will also make use of the standard Rademacher complexity-based bound on the generalisation gap, due to \citet{bartlett2002rademacher}.

\begin{theorem}
    For a loss class, $\gF^l$, the expected worst-case difference between the empirical risk and population risk is bounded as
    \begin{equation*}
        \E_{S \sim \gD^n} \left[ \sup_{f \in \mathcal{F}^l}R(f) - r(f)\right] \leq 2\Rad_n(\gF^l).
    \end{equation*}
    Moreover, with probability at least $1-\delta$, we have
    \begin{equation*}
        \sup_{f \in \mathcal{F}^l}R(f) - r(f) \leq 2\Rad_n(\gF^l) + \sqrt{\frac{\ln(1/\delta)}{2n}}.
    \end{equation*}
    \label{thm:rademacher-iid}
\end{theorem}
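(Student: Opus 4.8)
The plan is to prove this via the two classical ingredients underpinning Rademacher-based generalisation bounds: a \emph{symmetrisation} argument that establishes the expectation bound, followed by McDiarmid's bounded-differences inequality to upgrade it into a high-probability statement. Since this is the standard result of \citet{bartlett2002rademacher}, the work is in assembling these pieces cleanly for the zero--one loss, which is bounded in $[0,1]$.

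First I would establish the expectation bound. The idea is to introduce a ``ghost sample'' $S' = \{(\rvx_i', \ry_i')\}_{i=1}^n$ drawn independently from $\gD^n$, so that the population risk may be written as $R(f) = \E_{S'}[r'(f)]$, where $r'$ denotes the empirical risk on $S'$. Then $\sup_f R(f) - r(f) = \sup_f \E_{S'}[r'(f) - r(f)]$, and pushing the supremum inside the expectation by Jensen's inequality gives
\begin{equation*}
    \E_S\left[\sup_{f} R(f) - r(f)\right] \leq \E_{S, S'}\left[\sup_{f} \frac{1}{n}\sum_{i=1}^n \bigl(l(f(\rvx_i'), \ry_i') - l(f(\rvx_i), \ry_i)\bigr)\right].
\end{equation*}
Because each matched pair $(\rvx_i, \ry_i)$ and $(\rvx_i', \ry_i')$ is i.i.d., swapping the $i$-th pair leaves the joint law unchanged, so I may insert independent Rademacher signs $\ervsigma_i$ in front of each difference without altering the expectation. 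Splitting the supremum of the sum into two suprema via subadditivity, and using that $\ervsigma_i$ and $-\ervsigma_i$ are identically distributed, each of the two resulting terms equals $\Rad_n(\gF^l)$, which yields the bound $2\Rad_n(\gF^l)$.

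Next I would obtain the high-probability bound. Define $\Phi(S) = \sup_{f \in \gF^l} R(f) - r(f)$ as a function of the $n$ sample points. Replacing any single point changes $r(f)$ by at most $1/n$ for every $f$, since the zero--one loss lies in $[0,1]$; hence $\Phi$ changes by at most $1/n$, which is the bounded-differences condition. McDiarmid's inequality then gives $\Pr[\Phi(S) - \E[\Phi(S)] \geq t] \leq \exp(-2nt^2)$, and setting the right-hand side to $\delta$ yields $t = \sqrt{\ln(1/\delta)/(2n)}$. Combining this with the expectation bound $\E[\Phi(S)] \leq 2\Rad_n(\gF^l)$ from the first part delivers the claimed inequality.

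The hard part will be the symmetrisation step---specifically, the careful justification that one may introduce the Rademacher variables by exploiting the exchangeability of each matched pair $(\rvx_i, \ry_i)$ and $(\rvx_i', \ry_i')$, and the subsequent relabelling that identifies both split terms with $\Rad_n(\gF^l)$. The concentration step is comparatively routine once the $1/n$ bounded-differences property is verified, which is immediate here because the default zero--one loss is bounded; for a general loss taking values in a bounded range the identical argument goes through after rescaling the deviation constant.
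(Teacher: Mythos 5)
Your proposal is correct. The paper itself does not prove this statement at all---it is imported verbatim as the standard generalisation bound of \citet{bartlett2002rademacher}---so there is no in-paper argument to diverge from; your ghost-sample symmetrisation (Jensen, sign-flip by exchangeability, subadditivity of the supremum) followed by McDiarmid's inequality with bounded differences of $1/n$ is exactly the canonical proof of that cited result, and every step you outline goes through for the zero--one loss.
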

We note that this theorem also holds for randomised classes and classes composed with a response function, $\Delta$.

\newcommand{\qhs}{\hat{Q}}
\newcommand{\qhsbr}{\Delta_{\hat{Q}}}
\newcommand{\qa}{Q^\ast}
\newcommand{\qabr}{\Delta_{Q^\ast}}
\newcommand{\qbr}{\Delta_{Q}}

\subsection{Excess Risk of SERM for Randomised Classifiers}
Our main result demonstrating how fast the strategic risk of SERM on the randomised class converges towards the optimum value is given below.

\begin{restatable}{theorem}{excessrisk}
    If $\hat{Q} \in \mathcal{P}(\mathcal{F})$ minimises $r_{\Delta_{\hat{Q}}}(\hat{Q})$, and $Q^{\ast} \in \mathcal{P}(\mathcal{F})$ minimises $R_{\Delta_{Q^{\ast}}}(Q^{\ast})$, then we have
    \begin{equation*}
        \E_{S \sim \mathcal{D}^{n}} [R_{\Delta_{\hat{Q}}}(\hat{Q}) - R_{\Delta_{Q^\ast}}(Q^\ast)] \leq \sup_{Q \in \gP(\gF)} 2 \Rad_{n}(\gF^{l}_{\Delta_Q}).
    \end{equation*}
    Moreover, with probability at least $1-\delta$, we also have
    \begin{equation*}
        R_{\Delta_{\hat{Q}}}(\hat{Q}) - R_{\Delta_{Q^\ast}}(Q^\ast) \leq \sup_{Q \in \gP(\gF)} 2 \Rad_{n}(\gF^{l}_{\Delta_Q}) + \sqrt{\frac{\ln(1/\delta)}{2n}}.
    \end{equation*}
    \label{thm:excess-risk}
\end{restatable}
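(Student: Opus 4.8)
The plan is to follow the classical excess-risk decomposition, adapted to the fact that the averaging distribution and the best response are indexed by the \emph{same} $Q$. First I would write
\begin{equation*}
R_{\Delta_{\hat{Q}}}(\hat{Q}) - R_{\Delta_{Q^\ast}}(Q^\ast) = \underbrace{\left[R_{\Delta_{\hat{Q}}}(\hat{Q}) - r_{\Delta_{\hat{Q}}}(\hat{Q})\right]}_{A} + \underbrace{\left[r_{\Delta_{\hat{Q}}}(\hat{Q}) - r_{\Delta_{Q^\ast}}(Q^\ast)\right]}_{B} + \underbrace{\left[r_{\Delta_{Q^\ast}}(Q^\ast) - R_{\Delta_{Q^\ast}}(Q^\ast)\right]}_{C}.
\end{equation*}
Term $B \le 0$ because $\hat{Q}$ minimises the empirical strategic risk $Q \mapsto r_{\Delta_Q}(Q)$ over all of $\gP(\gF)$ and $Q^\ast \in \gP(\gF)$. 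Term $C$ has zero expectation, since $Q^\ast$ is defined through the population risk and is therefore data-independent, so $r_{\Delta_{Q^\ast}}(Q^\ast)$ is an unbiased estimate of $R_{\Delta_{Q^\ast}}(Q^\ast)$. This leaves term $A$, the generalisation gap of the data-dependent minimiser, as the only quantity to bound in expectation.

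Since $\hat{Q}$ depends on $S$, I would pass to the uniform deviation, $\E_S[A] \le \E_S\big[\sup_{Q \in \gP(\gF)}(R_{\Delta_Q}(Q) - r_{\Delta_Q}(Q))\big]$, and then reduce the randomised problem to a deterministic one. The crucial observation is that for a fixed response $\Delta$ the randomised loss class $\tilde{\gF}^l_\Delta := \tilde{\gF}^l \circ \Delta$, obtained as $Q$ varies, is the convex hull of the deterministic loss class $\gF^l_\Delta$, and Rademacher complexity is invariant under taking convex hulls, so $\Rad_n(\tilde{\gF}^l_\Delta) = \Rad_n(\gF^l_\Delta)$. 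Combined with the extension of Theorem \ref{thm:rademacher-iid} to randomised classes and to classes composed with a response (noted after that theorem), this produces, for each fixed response $\Delta_Q$, a bound of $2\Rad_n(\gF^l_{\Delta_Q})$; taking the worst response over $Q \in \gP(\gF)$ then yields the claimed $\sup_{Q}2\Rad_n(\gF^l_{\Delta_Q})$ and establishes the expectation statement.

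For the high-probability statement I would mirror the second half of Theorem \ref{thm:rademacher-iid}. Bounding the excess risk by $A + C$ and then by the uniform deviation $\sup_{Q}(R_{\Delta_Q}(Q) - r_{\Delta_Q}(Q))$ together with the mean-zero fluctuation $C$, I would invoke a bounded-differences (McDiarmid) inequality: because $l$ is the zero--one loss and averaging over $f \sim Q$ keeps every risk in $[0,1]$, replacing a single sample $(\rvx_i,\ry_i)$ perturbs the uniform deviation by at most $1/n$, so it concentrates around its expectation at scale $\sqrt{\ln(1/\delta)/(2n)}$. Feeding in the expectation bound from the previous paragraph gives the stated inequality.

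The main obstacle is the reduction step that collapses the randomised, data-dependent problem onto the fixed-response deterministic complexities, where two difficulties are intertwined: the best response $\Delta_Q$ and the averaging measure $Q$ are driven by the same $Q$, so one cannot treat the response as an external constant while varying the mixture; and the realised response $\Delta_{\hat{Q}}$ is itself data-dependent, which is precisely why the conventional PAC-learning machinery for a single hypothesis does not transfer directly. The convex-hull invariance of Rademacher complexity is the key device that decouples the averaging from the response, but care is needed to ensure that the worst-case-over-responses supremum is taken so that it legitimately dominates the expected uniform deviation rather than merely exchanging the order of supremum and expectation.
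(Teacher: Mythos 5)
Your decomposition, the treatment of terms $B$ and $C$, and the identification of the convex-hull/Rademacher invariance (the paper's Lemma~\ref{lemma:convex_hull_complexity_bound}) all match the paper's argument. However, there is a genuine gap at exactly the point you flag as the ``main obstacle'': the passage from $\E_{S \sim \gD^n}\bigl[\sup_{Q \in \gP(\gF)} \bigl(R_{\Delta_Q}(Q) - r_{\Delta_Q}(Q)\bigr)\bigr]$ to $\sup_{Q \in \gP(\gF)} 2\Rad_n(\gF^l_{\Delta_Q})$. Your plan is to apply the fixed-response bound ``for each fixed response $\Delta_Q$'' and then ``take the worst response'', but that is precisely the exchange $\E_S \sup_Q \leq \sup_Q \E_S$, which is false in general (the inequality runs the other way). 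Acknowledging that ``care is needed'' does not supply the missing argument, and there is no free fix: if you instead dominate the coupled supremum by the uniform deviation over the union class $\bigcup_Q \tilde{\gF}^l_{\Delta_Q}$ and apply Theorem~\ref{thm:rademacher-iid}, you obtain $2\Rad_n\bigl(\bigcup_Q \gF^l_{\Delta_Q}\bigr)$, which dominates, and can strictly exceed, the claimed $\sup_Q 2\Rad_n(\gF^l_{\Delta_Q})$.

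The paper closes this hole with a dedicated result, Lemma~\ref{lemma:sup_exp_swap}: for a fixed classifier, the supremum over responses of the generalisation gap is attained by a single response index uniformly over all samples $S$, so expectation and supremum exchange with equality. To make that lemma usable, the paper also keeps the classifier argument fixed at $\hat{Q}$ inside the supremum, bounding $A$ by $\sup_{Q} \bigl(R_{\Delta_Q}(\hat{Q}) - r_{\Delta_Q}(\hat{Q})\bigr)$ so that only the response varies, rather than taking your ``diagonal'' supremum $\sup_Q \bigl(R_{\Delta_Q}(Q) - r_{\Delta_Q}(Q)\bigr)$ in which mixture and response vary together; the diagonal form makes the needed decoupling strictly harder. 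Your proposal therefore requires either a proof of an analogue of Lemma~\ref{lemma:sup_exp_swap} or some other device achieving the same decoupling. A lesser issue: in your high-probability argument, the term $C$ is not almost surely nonpositive, so it needs its own concentration step; combining it with the McDiarmid bound on the uniform deviation via a union bound would degrade the constant relative to the stated $\sqrt{\ln(1/\delta)/(2n)}$ (a detail the paper itself leaves implicit, since its written proof covers only the in-expectation statement).
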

In the interest of space, the proof of this theorem is deferred to Appendix \ref{app:excess-risk_proof}. We note that the argumentation used in this theorem also gives an analogous result for the deterministic case.

\begin{theorem}
    If $\hat{f} \in \mathcal{F}$ minimises $r_{\Delta_{\hat{f}}}(\hat{f})$, and $f^{\ast} \in \mathcal{F}$ minimises $R_{\Delta_{f^{\ast}}}(f^{\ast})$. Then we have
    \begin{equation*}
        \E_{S \sim \mathcal{D}^{n}} [R_{\Delta_{\hat{f}}}(\hat{f}) - R_{\Delta_{f^\ast}}(f^\ast)] \leq \sup_{f \in \gF} 2 \Rad_{n}(\gF^{l}_{\Delta_f}).
    \end{equation*}
    Moreover, with probability at least $1-\delta$, we also have
    \begin{equation*}
        R_{\Delta_{\hat{f}}}(\hat{f}) - R_{\Delta_{f^\ast}}(f^\ast) \leq \sup_{f \in \gF} 2 \Rad_{n}(\gF^{l}_{\Delta_f}) + \sqrt{\frac{\ln(1/\delta)}{2n}}.
    \end{equation*}
    \label{thm:excess-risk-deterministic}
\end{theorem}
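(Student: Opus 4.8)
The plan is to mirror the standard excess-risk decomposition used to prove Theorem~\ref{thm:excess-risk}, adapted so that each classifier is always paired with its own induced response. First I would add and subtract the two empirical strategic risks to write
\[
R_{\Delta_{\hat f}}(\hat f) - R_{\Delta_{f^\ast}}(f^\ast) = \underbrace{\big(R_{\Delta_{\hat f}}(\hat f) - r_{\Delta_{\hat f}}(\hat f)\big)}_{A} + \underbrace{\big(r_{\Delta_{\hat f}}(\hat f) - r_{\Delta_{f^\ast}}(f^\ast)\big)}_{B} + \underbrace{\big(r_{\Delta_{f^\ast}}(f^\ast) - R_{\Delta_{f^\ast}}(f^\ast)\big)}_{C}.
\]
Term $B$ is non-positive, since $\hat f$ is by definition the minimiser of $f \mapsto r_{\Delta_f}(f)$ over $\gF$, so $r_{\Delta_{\hat f}}(\hat f) \le r_{\Delta_{f^\ast}}(f^\ast)$. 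Term $C$ is the deviation between the empirical and population strategic risk of the \emph{fixed} classifier $f^\ast$; because $f^\ast$ and its response $\Delta_{f^\ast}$ do not depend on the sample, $C$ is an average of i.i.d.\ terms whose mean equals $R_{\Delta_{f^\ast}}(f^\ast)$, hence $\E_{S\sim\gD^n}[C]=0$. This already yields the expectation bound once $\E_{S\sim\gD^n}[A]$ is controlled.

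The substance is in bounding $A$. The key observation is that, unlike the i.i.d.\ setting, the transformation applied to the data is tied to the classifier: evaluating $R_{\Delta_{\hat f}}(\hat f)$ and $r_{\Delta_{\hat f}}(\hat f)$ both use the response $\Delta_{\hat f}$ induced by the \emph{same} data-dependent $\hat f$. I would therefore observe that the map $(\vx,y)\mapsto l\big(\hat f(\Delta_{\hat f}(\vx)),y\big)$ is a member of the response-composed loss class $\gF^l_{\Delta_{\hat f}}$, so that
\[
A \le \sup_{h \in \gF^l_{\Delta_{\hat f}}}\big(R(h) - r(h)\big) \le \sup_{f \in \gF}\ \sup_{h \in \gF^l_{\Delta_{f}}}\big(R(h) - r(h)\big),
\]
where the second inequality replaces the data-dependent response $\Delta_{\hat f}$ by the worst case over all responses that $\gF$ can induce. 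Taking the expectation and invoking Theorem~\ref{thm:rademacher-iid} for each fixed response class $\gF^l_{\Delta_f}$ (which, as noted after that theorem, applies verbatim to classes composed with a response) bounds the inner supremum by $2\Rad_n(\gF^l_{\Delta_f})$, giving $\E_{S\sim\gD^n}[A] \le \sup_{f\in\gF} 2\Rad_n(\gF^l_{\Delta_f})$ and hence the first inequality of the theorem. For the high-probability statement I would apply McDiarmid's inequality to $\Phi(S) := \sup_{f\in\gF}\big(R_{\Delta_f}(f) - r_{\Delta_f}(f)\big)$: replacing a single sample changes each empirical strategic risk by at most $1/n$ (the loss is bounded in $[0,1]$), so $\Phi$ has bounded differences $1/n$ and concentrates around its mean within $\sqrt{\ln(1/\delta)/(2n)}$. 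Combining $\E_{S\sim\gD^n}[\Phi] \le \sup_f 2\Rad_n(\gF^l_{\Delta_f})$ with $A \le \Phi(S)$ and the non-positive mean of $C$ (handled exactly as in the i.i.d.\ proof of Theorem~\ref{thm:rademacher-iid}) yields the stated additive term.

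The main obstacle is the step that turns the data-dependent response $\Delta_{\hat f}$ into a worst-case response. Because the supremum over $f$ sits \emph{outside} the Rademacher complexity in the claimed bound, one has to be careful that passing from $\sup_{f}\sup_{h\in\gF^l_{\Delta_f}}(R(h)-r(h))$ to $\sup_f 2\Rad_n(\gF^l_{\Delta_f})$ is legitimate: naively exchanging the expectation with the outer supremum over responses is invalid, since Jensen points the wrong way. The clean route is to recognise that SERM is exactly empirical risk minimisation over the single ``diagonal'' loss class $\{(\vx,y)\mapsto l(f(\Delta_f(\vx)),y):f\in\gF\}$, in which every classifier carries its own response, to bound its Rademacher complexity, and then to verify that this diagonal complexity is dominated by the worst-case response complexity $\sup_f\Rad_n(\gF^l_{\Delta_f})$. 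Establishing that domination cleanly, rather than only the weaker union-over-responses bound, is where the real care is needed, and it is the only place the argument departs from the textbook proof.
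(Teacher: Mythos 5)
Your decomposition into $A+B+C$, the ERM argument disposing of $B$, and the zero-mean observation for $C$ coincide exactly with the paper's argument (the paper proves the randomised Theorem~\ref{thm:excess-risk} in Appendix~\ref{app:excess-risk_proof} and obtains Theorem~\ref{thm:excess-risk-deterministic} by the same argumentation). The genuine gap is the one you yourself flag and then leave open: passing from $\E_{S}\bigl[\sup_{f}\sup_{h \in \gF^l_{\Delta_f}}(R(h)-r(h))\bigr]$ to $\sup_{f} 2\Rad_n(\gF^l_{\Delta_f})$. Your first derivation of the bound on $A$ performs precisely this illegal exchange of expectation and outer supremum, and your proposed repair---work with the ``diagonal'' class $\{(\vx,y)\mapsto l(f(\Delta_f(\vx)),y) : f \in \gF\}$ and ``verify that this diagonal complexity is dominated by $\sup_f \Rad_n(\gF^l_{\Delta_f})$''---is not a repair: pointwise in the sample and the Rademacher variables, the diagonal supremum is dominated only by the supremum over the union $\bigcup_f \gF^l_{\Delta_f}$, so what you can actually prove is $\Rad_n(\mathrm{diag}) \le \Rad_n\bigl(\bigcup_f \gF^l_{\Delta_f}\bigr)$; bounding the complexity of a union (or a diagonal selection from it) by the supremum of the individual fixed-response complexities is exactly the same expectation-versus-supremum exchange in disguise. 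You have restated the obstacle, not overcome it.

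The ingredient the paper uses to cross this point is Lemma~\ref{lemma:sup_exp_swap}: by the structure of the strategic response, the supremum over response-inducing classifiers is attained at a single classifier whose response dominates the generalisation gap uniformly over all samples $S$; because that maximiser is sample-independent, $\E_S$ and the supremum commute with equality, after which Theorem~\ref{thm:rademacher-iid} is applied separately to each fixed, data-independent response class $\gF^l_{\Delta_f}$ (the deterministic case does not even need the analogue of Lemma~\ref{lemma:convex_hull_complexity_bound}). Any complete write-up of Theorem~\ref{thm:excess-risk-deterministic} needs this lemma or an equivalent uniform-domination argument; without it your chain of inequalities is broken at its central link, and your McDiarmid step inherits the same gap since it also requires the unproven bound $\E_S[\Phi] \le \sup_f 2\Rad_n(\gF^l_{\Delta_f})$. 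A secondary issue: $C$ has zero mean but is not pointwise non-positive, so in the high-probability statement it cannot be discarded ``as in the i.i.d.\ proof of Theorem~\ref{thm:rademacher-iid}''---that theorem bounds the uniform generalisation gap, not an excess risk, and a separate concentration step for $C$ would cost an additional deviation term or a union bound.
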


There are several interesting observations that can be made about this result. The first is that the excess risk of \emph{randomised} classifiers can be bounded in terms of Rademacher complexity of the corresponding class of \emph{deterministic} classifiers. This allows existing analysis of classes of deterministic classifiers to be reused without modification. The second is that the leading constant factor of $2$ is the same for this setting as in the deterministic i.i.d. setting. This is despite the additional complexity of the strategic classification problem and the inclusion of randomisation.

\subsection{Comparison with Prior Work}
We compare our results with two other works analysing the strategic classification problem. The work of \citet{sundaram2023} provides a generalisation of the VC dimension that can be used to bound the excess risk of SERM on a deterministic class of classifiers. We restate their result below in a form that is amenable to comparison with our Theorem \ref{thm:excess-risk}.

\begin{theorem}[\citet{sundaram2023}]
With probability at least $1-\delta$, the solution of SERM on $\gF$ satisfies
\begin{equation}
    R_{\Delta_{\hat{f}}}(\hat{f}) - r_{\Delta_{\hat{f}}}(\hat{f}) \leq C \sqrt{\frac{d + \ln(1/\delta)}{n}},
\end{equation}
where $d$ is the Strategic VC dimension of the class, $\gF$, and $C$ is an absolute constant.
\end{theorem}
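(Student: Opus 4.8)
The plan is to run the standard Rademacher-complexity excess-risk argument, with the best-response map treated as an extra, data-dependent source of complexity. I would start from the telescoping decomposition
\[
R_{\Delta_{\hat{f}}}(\hat{f}) - R_{\Delta_{f^\ast}}(f^\ast)
= \underbrace{\left(R_{\Delta_{\hat{f}}}(\hat{f}) - r_{\Delta_{\hat{f}}}(\hat{f})\right)}_{(\mathrm{I})}
+ \underbrace{\left(r_{\Delta_{\hat{f}}}(\hat{f}) - r_{\Delta_{f^\ast}}(f^\ast)\right)}_{(\mathrm{II})}
+ \underbrace{\left(r_{\Delta_{f^\ast}}(f^\ast) - R_{\Delta_{f^\ast}}(f^\ast)\right)}_{(\mathrm{III})},
\]
and handle the three terms in turn. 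Term $(\mathrm{II})$ is non-positive: by definition $\hat{f}$ minimises the empirical strategic risk, so $r_{\Delta_{\hat{f}}}(\hat{f}) \leq r_{\Delta_g}(g)$ for all $g \in \gF$, and in particular for $g = f^\ast$. Term $(\mathrm{III})$ involves only the fixed classifier $f^\ast$ and its fixed response $\Delta_{f^\ast}$; because neither depends on $S$, the empirical quantity $r_{\Delta_{f^\ast}}(f^\ast)$ is an unbiased estimator of $R_{\Delta_{f^\ast}}(f^\ast)$, so $\E_S[(\mathrm{III})] = 0$ and it drops out of the expectation bound. All of the content is therefore carried by term $(\mathrm{I})$.

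For term $(\mathrm{I})$ I would first pass to a uniform deviation over the hypothesis class with the response frozen at $\hat{f}$'s response,
\[
R_{\Delta_{\hat{f}}}(\hat{f}) - r_{\Delta_{\hat{f}}}(\hat{f}) \leq \sup_{g \in \gF}\left(R_{\Delta_{\hat{f}}}(g) - r_{\Delta_{\hat{f}}}(g)\right),
\]
and then apply the Rademacher bound of Theorem~\ref{thm:rademacher-iid}, which (as remarked after its statement) continues to hold for loss classes composed with a response map. This is what produces the $2\Rad_n(\gF^{l}_{\Delta_f})$ factor in expectation. The high-probability statement is then obtained by invoking the second, McDiarmid-based half of Theorem~\ref{thm:rademacher-iid} in place of the expectation half, which contributes the additive $\sqrt{\ln(1/\delta)/(2n)}$ term.

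The main obstacle is that $\Delta_{\hat{f}}$ is itself a function of the training sample, so Theorem~\ref{thm:rademacher-iid} cannot be applied with $\Delta = \Delta_{\hat{f}}$ directly: the transformation applied to the data has been chosen after the data were seen. The delicate step is to decouple the classifier from its induced response and to bring the supremum over responses $\sup_{f \in \gF}$ \emph{outside} the Rademacher average, so that it lands on $\Rad_n(\gF^{l}_{\Delta_f})$ rather than on the complexity of a single coupled class. This needs care, because naively freezing the response to an arbitrary induced response places one in the larger loss class $\{(\rvx, \ry) \mapsto l(g(\Delta_f(\rvx)), \ry) : f, g \in \gF\}$, which contains every $\gF^{l}_{\Delta_f}$ and hence has Rademacher complexity at least $\sup_{f} \Rad_n(\gF^{l}_{\Delta_f})$ --- the wrong direction for the claimed bound. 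The argument must instead treat each candidate $\Delta_f$ as a fixed measurable transformation of the data and symmetrise per response before taking the worst case; getting this right is exactly what preserves the leading constant of $2$ and shows that no sample-complexity penalty is paid for the response coupling. The same reasoning, carried out over distributions $Q \in \gP(\gF)$ in place of single classifiers, underlies the randomised result of Theorem~\ref{thm:excess-risk}.
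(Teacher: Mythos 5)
There is a genuine gap: your argument proves a different theorem than the one stated. The statement in question is the result of \citet{sundaram2023}, which the paper only \emph{restates} for comparison purposes (it gives no proof of it); its content is a bound on the generalisation gap $R_{\Delta_{\hat{f}}}(\hat{f}) - r_{\Delta_{\hat{f}}}(\hat{f})$ in terms of the \emph{Strategic VC dimension} $d$, with an absolute constant $C$. Your proposal never defines, uses, or bounds anything by $d$. What you sketch --- telescoping the excess risk, killing term $(\mathrm{II})$ by optimality of $\hat{f}$, and controlling term $(\mathrm{I})$ by $\sup_{f \in \gF} 2\Rad_n(\gF^l_{\Delta_f}) + \sqrt{\ln(1/\delta)/(2n)}$ --- is essentially a reconstruction of the paper's own Theorem~\ref{thm:excess-risk-deterministic} (and \ref{thm:excess-risk}), not of the stated theorem. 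Even granting that reconstruction, you are not done: to reach $C\sqrt{(d+\ln(1/\delta))/n}$ you would still have to relate the Rademacher complexity of the strategic loss class to the Strategic VC dimension. The routine route (growth function, Sauer's lemma, Massart's finite-class lemma) leaves a $\sqrt{\ln(en/d)}$ factor --- exactly the factor appearing in the paper's own comparison bound $\sqrt{2d\ln(en/d)/n}$ --- and removing it to obtain an absolute constant requires a chaining argument. None of this appears in your proposal. Note also that your decomposition targets $R_{\Delta_{\hat{f}}}(\hat{f}) - R_{\Delta_{f^\ast}}(f^\ast)$, whereas the statement concerns only the gap between population and empirical strategic risk of $\hat{f}$; terms $(\mathrm{II})$ and $(\mathrm{III})$ are beside the point.

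Separately, the obstacle you flag as ``the delicate step'' is in fact a non-issue for this particular statement, and your claim about it is incorrect. The natural object here is the \emph{diagonal} (coupled) loss class
\begin{equation*}
    \gH = \left\{ (\vx, y) \mapsto l\left(f(\Delta_f(\vx)), y\right) : f \in \gF \right\},
\end{equation*}
in which each classifier is paired with \emph{its own} induced response. This is a fixed, sample-independent function class, and the loss function of $\hat{f}$ is an element of it, so uniform convergence over $\gH$ immediately bounds $R_{\Delta_{\hat{f}}}(\hat{f}) - r_{\Delta_{\hat{f}}}(\hat{f})$ --- no decoupling of classifier from response, and no supremum over responses outside the Rademacher average, is needed. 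The Strategic VC dimension is precisely the VC dimension of $\gH$, and the cited theorem follows by applying standard VC-type uniform convergence (with chaining for the constant-$C$ form) to $\gH$. Your assertion that one is forced into the doubly-indexed class $\{(\vx,y) \mapsto l(g(\Delta_f(\vx)), y) : f, g \in \gF\}$ is wrong for this purpose: $\gH$ is the subset of that class obtained by setting $g = f$, and it suffices because the ERM solution's coupled loss lies on the diagonal. The decoupled supremum over responses is what the \emph{paper's} Theorems~\ref{thm:excess-risk} and~\ref{thm:excess-risk-deterministic} need; importing it here conflates the two results and sends the proof in the wrong direction.
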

They note that, in the case of linear classifiers applied in the classic strategic learning setting, the original VC dimension is an upper bound for the Strategic VC dimension. Consider the right-hand side of the first part of Theorem \ref{thm:excess-risk},
\begin{equation}
    \sup_{Q} \Rad_n(\gF_{\Delta_Q}^l).
\end{equation}
We can interpret the composition of $\gF$ with $\Delta_Q$ applied to data from $\gD$ as applying some $f \in \gF$ to some new distribution defined as the pushforward of $\gD$ by $\Delta_Q$. This implies that the above complexity is actually just a Rademacher complexity defined on a different data distribution. This allows us to use a fairly standard argument (see, e.g., Corollary 3.8 then Corollary 3.19 of \citet{mohri2018foundations}) to say that the above quantity is bounded by
\begin{equation}
    \sqrt{\frac{2d \ln(en/d)}{n}},
\end{equation}
where $d$ is the VC dimension.

The other work we compare with is the (corrected) strategic hinge loss bound for linear classifiers, originally proposed by \citet{levanon2022} and then fixed by \citet{rosenfeld2023}. For a class of linear classifiers parameterised by $B$,
\begin{equation*}
    \gG_B = \{\vx \mapsto \vw^T \vx : \|\vw\| \leq B \},
\end{equation*}
they provide the guarantee below.
\begin{theorem}[\citet{rosenfeld2023}]
    \label{thm:rosenfeld}
    With probability at least $1-\delta$, for all $g \in \gG$ we have
    \begin{equation*}
        R_{\Delta_{g}}(g) \leq r_{s-hinge}^c(g) + \frac{B(4X + u_\ast) + 3 \sqrt{\ln(1/\delta)}}{\sqrt{n}},
    \end{equation*}
    where $\forall \vx \in \gX, \|\vx\| \leq X$ and $u_\ast$ is a non-negative quantity derived from the Agents' cost function.
\end{theorem}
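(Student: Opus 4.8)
The plan is to prove this as a margin-based uniform convergence bound, following the standard two-stage template specialised to the strategic setting: first pass from the strategic $0$--$1$ risk to a convex surrogate (the strategic hinge), then control that surrogate uniformly over $\gG_B$ via a Rademacher complexity estimate and a contraction argument.

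First I would recall the closed form of the strategic best response for a \emph{linear} classifier. For $g(\vx)=\vw^T\vx$ with $\|\vw\|\le B$, the agent in \eqref{eqn:strategic_response} moves along $\vw/\|\vw\|$, and the induced change in score $\vw^T\Delta_g(\vx)-\vw^T\vx$ admits an explicit expression depending only on $\|\vw\|$ and the cost $c$; this is precisely where $u_\ast$ enters, as it captures the maximal worthwhile gaming displacement measured in score space. The per-example strategic hinge $\ell_{s-hinge}^c$ is then the ordinary hinge applied to this post-gaming effective score, schematically $\ell_{s-hinge}^c(g;\vx,y)=\max\!\bigl(0,\,1-y(\vw^T\vx+\text{cost-dependent shift})\bigr)$. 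The first key step is the pointwise domination $l(g(\Delta_g(\vx)),y)\le \ell_{s-hinge}^c(g;\vx,y)$, which upon taking expectations gives $R_{\Delta_g}(g)\le R_{s-hinge}^c(g)$, the population strategic hinge risk. This is exactly the step where the correction of \citet{rosenfeld2023} to \citet{levanon2022} is essential: one must verify that the shift has the correct sign and magnitude for \emph{both} labels (agents with $y=-1$ game, agents with $y=1$ do not), so that the surrogate genuinely dominates the strategic $0$--$1$ loss and not merely the non-strategic one.

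Next I would apply the uniform-convergence half of Theorem~\ref{thm:rademacher-iid} to the loss class $\ell_{s-hinge}^c\circ\gG_B$, obtaining, with probability at least $1-\delta$ and simultaneously for all $g\in\gG_B$,
\begin{equation*}
    R_{s-hinge}^c(g)\le r_{s-hinge}^c(g)+2\,\Rad_n\bigl(\ell_{s-hinge}^c\circ\gG_B\bigr)+\sqrt{\tfrac{\ln(1/\delta)}{2n}}.
\end{equation*}
Chaining this with the surrogate inequality produces the overall shape of the claim, so it remains to evaluate the Rademacher term. The hinge is $1$-Lipschitz in its score argument, so by Talagrand's contraction lemma (e.g. Corollary~3.8 of \citet{mohri2018foundations}) the complexity reduces to that of the effective-score class $\{\vx\mapsto\vw^T\vx+\text{shift}:\|\vw\|\le B\}$. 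The linear part contributes at most $BX/\sqrt n$ by the standard bound for norm-constrained predictors under $\|\vx\|\le X$, while the cost-dependent shift, bounded in magnitude proportionally to $u_\ast$, contributes the $u_\ast$ piece; tracking the effective-score map's Lipschitz constant in $\vw$ (roughly $2X$) together with the factor $2$ from the generalisation bound yields the aggregate coefficient $B(4X+u_\ast)$. Absorbing $2\,\Rad_n$ and $\sqrt{\ln(1/\delta)/2n}$ into a single $1/\sqrt n$ scaling, and bounding the concentration constant by $3\sqrt{\ln(1/\delta)}$, gives the stated form.

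The main obstacle is the surrogate step for linear models: deriving the strategic best response in closed form and isolating exactly which cost-dependent displacement makes $\ell_{s-hinge}^c$ a valid upper bound on $R_{\Delta_g}$ for both classes simultaneously. This is the point that was mis-stated in \citet{levanon2022} and repaired by \citet{rosenfeld2023}; getting the displacement's sign and magnitude right for the $y=1$ and $y=-1$ cases is what simultaneously secures validity of the surrogate and produces the sharp dependence $u_\ast$ (rather than a looser cost-dependent constant) in the Rademacher estimate. Once the surrogate and its Lipschitz and boundedness constants are pinned down, the remaining contraction and concentration computations are routine.
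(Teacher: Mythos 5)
Theorem \ref{thm:rosenfeld} is not a result of this paper: it is quoted from \citet{rosenfeld2023} purely as a benchmark against which the paper's own corollary for deterministic linear classifiers is compared, and the paper contains no proof of it. There is therefore no internal proof to measure your attempt against; the relevant comparison is with the argument in the cited work, and your sketch does follow essentially that route --- pointwise domination of the strategic zero--one loss by the corrected strategic hinge, uniform convergence for the surrogate loss class in the style of Theorem \ref{thm:rademacher-iid}, and an evaluation of the Rademacher complexity of that class in terms of $B$, $X$ and $u_\ast$.

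One step deserves a sharper treatment than you give it, because as written it flirts with the very mistake the correction exists to fix. After contracting away the hinge, the class you must control is $\{(\vx,y)\mapsto y(\vw^T\vx + u(\vw)) : \|\vw\|\le B\}$, where the gaming-induced shift $u(\vw)$ depends on the parameter $\vw$ (for the standard norm cost it is of the form $2\|\vw\|$, and $u_\ast$ is what bounds it in general). This is \emph{not} a norm-constrained linear class, so the ``standard bound for norm-constrained predictors'' cannot be invoked for it wholesale; the supremum over $\vw$ must be split into a linear part, bounded via $B\,\E\big\|\sum_{i}\sigma_i\vx_i\big\|/n \le BX/\sqrt{n}$, and a shift part, bounded via $\E\big|\sum_i \sigma_i\big|/n \le 1/\sqrt{n}$, which is where the $Bu_\ast$ contribution comes from. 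Your phrase ``tracking the effective-score map's Lipschitz constant in $\vw$ (roughly $2X$)'' suggests handling the $\vw$-dependence by contraction, but Talagrand contraction acts on the scalar argument of the hinge, not on the hypothesis parameter; treating the combined score-plus-shift class as if it were linear (or Lipschitz-reducible to linear) is precisely the error in \citet{levanon2022} that \citet{rosenfeld2023} repaired. You name the right repair in prose, but the mechanics you describe would not produce it, and the constant bookkeeping --- the $4X$ rather than $2X$, and the factor $3$ on $\sqrt{\ln(1/\delta)}$ (plausibly reflecting that the strategic hinge is not $[0,1]$-bounded, so the concentration step needs its actual range) --- has to be done against that split-supremum argument rather than asserted from the generic template.
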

\citet{rosenfeld2023} also show that the strategic hinge loss upper bounds the zero-one loss. By way of comparison, we provide the following corollary of our result for deterministic classifiers (Theorem \ref{thm:excess-risk-deterministic}).
\begin{corollary}
    If $\hat{g}$ is the SERM solution for $\gG$, then we have with probability at least $1-\delta$ that
    \begin{equation*}
        R_{\Delta_{\hat{g}}}(\hat{g}) \leq r_{s-hinge}^c(\hat{g}) + \frac{4XB + \sqrt{\ln(1/\delta)}}{2\sqrt{n}}.
    \end{equation*}
\end{corollary}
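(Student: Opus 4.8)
The plan is to prove this as a surrogate-loss generalisation bound rather than through the excess-risk difference directly: I would bound the population zero--one strategic risk $R_{\Delta_{\hat g}}(\hat g)$ by the empirical strategic hinge risk $r_{s-hinge}^c(\hat g)$ plus a uniform deviation term, and then control that deviation with the same Rademacher machinery that underlies Theorem \ref{thm:excess-risk-deterministic}, specialised to the linear class $\gG_B$.

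First I would invoke the two facts already quoted from \citet{rosenfeld2023}. One, the strategic hinge loss dominates the zero--one strategic loss pointwise, so taking expectations gives $R_{\Delta_{\hat g}}(\hat g) \le R_{s-hinge}^c(\hat g)$, where $R_{s-hinge}^c$ denotes the population analogue of $r_{s-hinge}^c$. Two, and crucially, the strategic hinge loss is a fixed function of the \emph{unperturbed} features, since it internalises the best response $\Delta_g$ analytically. This second point is the key conceptual move: it places us in an effectively i.i.d.\ generalisation problem for the surrogate loss class over $\gG_B$, so the standard bound (Theorem \ref{thm:rademacher-iid}) applies without modification. Applying it uniformly over $g \in \gG_B$ and then specialising to $\hat g$ yields $R_{s-hinge}^c(\hat g) \le r_{s-hinge}^c(\hat g) + 2\Rad_n(s\text{-hinge} \circ \gG_B) + (\text{concentration term})$.

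Next I would bound $\Rad_n(s\text{-hinge} \circ \gG_B)$. Since the strategic hinge loss is Lipschitz in the linear score $\vw^\top \vx$ with a constant fixed by the cost structure, Talagrand's contraction lemma reduces this to a multiple of $\Rad_n(\gG_B)$, and the standard estimate for linear predictors gives $\Rad_n(\gG_B) \le XB/\sqrt{n}$ using $\|\vw\| \le B$ and $\|\vx\| \le X$. Chaining these produces the $2XB/\sqrt{n} = 4XB/(2\sqrt{n})$ term, while the $\sqrt{\ln(1/\delta)}/(2\sqrt{n})$ contribution comes from the bounded-differences (McDiarmid) step that appears in the proof of Theorem \ref{thm:excess-risk-deterministic}.

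The main obstacle is constant-chasing at the surrogate level. I must pin down the exact Lipschitz constant of the strategic hinge loss with respect to $\vw^\top \vx$, including the contribution of the cost-dependent shift built into the best response, so that the contraction-plus-linear estimate collapses to precisely $XB/\sqrt{n}$; and I must verify that the hinge loss, which is unbounded above in general, has effective range controlled by $\|\vw\| \le B$ and $\|\vx\| \le X$ so that the concentration inequality delivers the stated coefficient on the $\sqrt{\ln(1/\delta)}$ term. Getting both of these constants tight is exactly what separates the improved guarantee claimed here from the looser bound of \citet{rosenfeld2023} in Theorem \ref{thm:rosenfeld}.
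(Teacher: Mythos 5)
Your route is genuinely different from the paper's, and it contains a structural gap rather than the ``constant-chasing'' burden you describe. You pass to the surrogate at the population level ($R_{\Delta_{\hat g}}(\hat g) \le$ population strategic hinge risk) and then try to prove a uniform generalisation bound for the strategic-hinge loss class itself. But the high-probability term $\sqrt{\ln(1/\delta)/(2n)}$ in Theorem \ref{thm:rademacher-iid} is specific to losses bounded in $[0,1]$: the bounded-differences step pays the range of the loss, and the strategic hinge loss over $\gG_B$ has range growing with $XB$ plus the cost-dependent shift of order $u_\ast B$ that is built into it --- it is not $[0,1]$-valued. Likewise your contraction step fails as stated: the strategic hinge loss is \emph{not} a fixed Lipschitz function of the score $\vw^\top\vx$ alone, because it contains an additive term depending on $\|\vw\|$; this is precisely the error in \citet{levanon2022} that \citet{rosenfeld2023} had to correct, and handling it honestly is exactly what produces the $Bu_\ast$ term and the factor $3$ on $\sqrt{\ln(1/\delta)}$ in Theorem \ref{thm:rosenfeld}. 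So, carried out correctly, your plan re-derives the looser bound of Theorem \ref{thm:rosenfeld}; it cannot deliver the constants in the corollary, because those constants are exactly what a surrogate-level concentration analysis forfeits.

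The paper's proof never lets the surrogate touch the concentration or complexity analysis. It applies the zero-one-level guarantee underlying Theorem \ref{thm:excess-risk-deterministic} (the loss there is the zero-one strategic loss, bounded in $[0,1]$, so the McDiarmid constant is $1$ and no Lipschitz constant is needed), bounds $\sup_{g}\Rad_n(\gG^l_{\Delta_g})$ by the usual Rademacher bound for linear classes via the pushforward interpretation, moves the empirical strategic risk to the right-hand side, and only at the final step upper bounds the \emph{empirical} zero-one strategic risk $r_{\Delta_{\hat g}}(\hat g)$ by $r_{s-hinge}^c(\hat g)$ using pointwise domination. The surrogate thus appears only as an empirical quantity on the right-hand side, so its unboundedness and its $\|\vw\|$-dependent shift never enter the bound. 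To salvage your approach you would have to truncate or otherwise absorb the hinge loss's range and Lipschitz constants, which lands you back at Theorem \ref{thm:rosenfeld} rather than the stated corollary.
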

\begin{proof}
    The result follow from applying Theorem \ref{thm:excess-risk-deterministic}, upper bounding the Rademacher complexity with the usual bound for linear classes (see, e.g., \citet{shalev-shwartz2014}), moving the empirical strategic risk to the right-hand side, and finally upper bounding it by the strategic hinge loss.
\end{proof}
The main improvement compared to Theorem \ref{thm:rosenfeld} is that we lack the dependence on $Bu_\ast$. The other differences are due to using slightly different variants of the standard Rademacher complexity tools.

\section{Conclusions}
Randomised classifiers can be more robust to gaming than deterministic approaches, and have the potential to achieve lower strategic risk. In this work we advocate for a formulation of the strategic classification problem that admits randomised classifier solutions, and identify a small set of conditions which are sufficient to for optimal randomised classifier solutions to outperform optimal deterministic solutions. We investigated this problem setting from a statistical point of view and determined that the data requirements for reliably fitting models are comparable to learning a deterministic model in the i.i.d. setting. A consequence of the generality of our work is that it does not suggest a computationally efficient strategy for training randomised classifiers. We leave the problem of designing such algorithms---which will likely be restricted to working with specific hypothesis classes---to future work.


\section*{Acknowledgements}
This work was funded by NatWest Group via the Centre for Purpose-Driven Innovation in Banking. This project was supported by the Royal Academy of Engineering under the Research Fellowship programme.


\bibliography{bibliography}

\newpage 

\appendix

\section{Proof of Theorem \ref{thm:randomised_optimality_conditions}}
\label{app:randomised_optimality_conditions_proof}
In this section we will provide the proof of Theorem \ref{thm:randomised_optimality_conditions}:
\sufficientconditions*

This will make use of several definitions from the main document summarised here for convenience:
\begin{align*}
    G_{f} &= \{\vx: \exists \vz, c(\vx, \vz)<2 \land f(\vx)=-1 \land f(\vz)=1\},\\
    C_{f} &= \{\vx: \exists \vz, c(\vx, \vz)< 1 \land f(\vx)=-1 \land f(\vz)=1\},\\
    E_{f} &= G_{f} \oplus C_{f},\\
    G_{f,f'} &= \{ \vx: \exists \vz, c(\vx, \vz)<2 \land f(\vx) = f'(\vx)=-1 \land f(\vz)=f'(\vz)=1\},\\
     N_{f,f'} &= \{\vx : \vx \in G_f \cap G_{f^\prime} \land \vx \notin G_{f,f^\prime}\}.
\end{align*}

The proof of this theorem also relies upon the following Lemma.
\begin{lemma}
    $P(\rvx \in E_{f}, \rvx \notin G_{f'}) + P(x \in E_{f'}, x \notin G_{f}) = P( \rvx \in E_{f} \oplus E_{f'})$
\label{lemma:expensive_gaming_union_to_sym_diff}
\end{lemma}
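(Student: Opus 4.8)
The plan is to prove the identity by set algebra on the symmetric difference and reduce it to the vanishing of two ``cross'' terms. The starting observation is that $C_f \subseteq G_f$: any point that can game $f$ at cost below $1$ can a fortiori do so at cost below $2$, under the same sign constraints $f(\vx)=-1$, $f(\vz)=1$. Hence $E_f = G_f \oplus C_f = G_f \setminus C_f$, and in particular $E_f \subseteq G_f$; the analogous statements hold for $f'$. Taking complements, this gives the disjoint decomposition $E_{f'}^{\,c} = G_{f'}^{\,c} \sqcup C_{f'}$, and symmetrically $E_f^{\,c} = G_f^{\,c} \sqcup C_f$.

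Next I would expand the symmetric difference as the disjoint union $E_f \oplus E_{f'} = (E_f \setminus E_{f'}) \sqcup (E_{f'} \setminus E_f)$ and substitute the complement decomposition above, obtaining
\[ E_f \setminus E_{f'} = (E_f \cap G_{f'}^{\,c}) \sqcup (E_f \cap C_{f'}), \qquad E_{f'} \setminus E_f = (E_{f'} \cap G_f^{\,c}) \sqcup (E_{f'} \cap C_f). \]
A short check shows all four sets on the right are pairwise disjoint: each $G^{\,c}$ piece lies outside the relevant gaming set while each $C$ piece lies inside it, and the two $E$ factors already exclude their own $C$ factors. Finite additivity of $P$ then yields
\[ P(E_f \oplus E_{f'}) = \bigl[ P(E_f \setminus G_{f'}) + P(E_{f'} \setminus G_f) \bigr] + \bigl[ P(E_f \cap C_{f'}) + P(E_{f'} \cap C_f) \bigr]. \]
The first bracket is precisely the left-hand side of the lemma, so the identity is equivalent to the second bracket being zero.

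The hard part will therefore be establishing $P(E_f \cap C_{f'}) = P(E_{f'} \cap C_f) = 0$, i.e.\ that no set of positive measure can game one classifier only expensively while gaming the other cheaply. My approach would be to take $x \in E_f \cap C_{f'}$ and force a contradiction: the witness $\vz$ realising $c(x,\vz)<1$ with $f'(\vz)=1$ would need to also satisfy $f(\vz)=1$, which would place $x \in C_f$ and contradict $x \in E_f$. This step is genuinely delicate, because for arbitrary unrelated $f,f'$ such cross points can exist (a point can sit cost $1.5$ from $f$'s positive region yet cost $0.3$ from $f'$'s), so pure set algebra will not suffice. I expect to have to invoke the structure inherited from the theorem --- that $f,f' \in \gF^\ast$ and that their gaming regions are suitably separated --- to guarantee the cheap-gaming targets of one classifier are sign-aligned with the other. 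Once these cross terms are shown null, the lemma follows immediately from the displayed additivity.
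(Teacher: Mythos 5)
Your set algebra is correct, and your reduction identifies exactly what is at stake: since $C_f \subseteq G_f$ gives $E_f = G_f \setminus C_f$ and hence $E_{f'}^{c} = G_{f'}^{c} \sqcup C_{f'}$, one gets
\begin{equation*}
P(\rvx \in E_f \oplus E_{f'}) = \bigl[P(\rvx \in E_f \cap G_{f'}^{c}) + P(\rvx \in E_{f'} \cap G_f^{c})\bigr] + \bigl[P(\rvx \in E_f \cap C_{f'}) + P(\rvx \in E_{f'} \cap C_f)\bigr],
\end{equation*}
so the lemma is equivalent to the cross terms being null. But the step you flag as the ``hard part'' is not a gap you can close: those cross terms need not vanish, and no hypothesis in the lemma (or in Theorem \ref{thm:randomised_optimality_conditions}, which never mentions the sets $E_f \cap C_{f'}$) rules them out. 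Concretely, on $\gX = \R$ with $c(\vx,\vz) = |\vx - \vz|$, let $f$ be the threshold classifier positive on $[10,\infty)$ and $f'$ the one positive on $[9.5,\infty)$. The point $\vx = 9$ can only game $f$ at cost in $[1,2)$, so $\vx \in E_f$, yet it games $f'$ at cost $0.5$, so $\vx \in C_{f'} \subseteq G_{f'}$ and $\vx \notin E_{f'}$. Any distribution putting mass at this point makes the left-hand side strictly smaller than the right-hand side. I also would not expect optimality of $f,f'$ to rescue this, since membership of $\gF^\ast$ constrains risks, not how $C_{f'}$ sits relative to $E_f$; so your proposed completion would fail as well.

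The comparison with the paper is instructive: its proof runs your computation from the other end, correctly rewriting the left-hand side as $P(\rvx \in (E_f \cup E_{f'}) \cap (G_f^{c} \cup G_{f'}^{c})) = P(\rvx \in (E_f \cup E_{f'}) \cap (G_f \oplus G_{f'}))$, but then asserts that this last set equals $E_f \oplus E_{f'}$. The argument given there ($\rvx \in E_f$ and $\rvx \in G_f \oplus G_{f'}$ imply $\rvx \notin E_{f'}$) only establishes the inclusion $(E_f \cup E_{f'}) \cap (G_f \oplus G_{f'}) \subseteq E_f \oplus E_{f'}$; the reverse inclusion fails precisely on $(E_f \cap C_{f'}) \cup (E_{f'} \cap C_f)$, the cross terms you isolated. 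So your blind attempt has in fact exposed an error in the paper's own proof: the lemma is false as stated, and repairing it requires either adding the hypothesis $P(\rvx \in E_f \cap C_{f'}) = P(\rvx \in E_{f'} \cap C_f) = 0$, or restating the conditions of Theorem \ref{thm:randomised_optimality_conditions} directly in terms of the quantity $P(\rvx \in E_f, \rvx \notin G_{f'}) + P(\rvx \in E_{f'}, \rvx \notin G_f)$, bypassing the symmetric difference altogether.
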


\begin{proof}[Proof of Lemma \ref{lemma:expensive_gaming_union_to_sym_diff}]
    Observe that we can write $\{\vx: \vx \in E_{f} \land \vx \notin G_{f'}\}$ equivalently as $\{\vx: \vx \in E_{f} \cap G_{f'}^{c}\}$ where $A^{c}$ denotes the complement of $A$. This gives us the following
    \begin{equation}
       \begin{split}
           &P(\rvx \in E_{f}, \rvx \notin G_{f'}) + P(\rvx \in E_{f'}, \rvx \notin G_{f}) \\
           &= P(\rvx \in E_{f} \cap G_{f'}^{c}) + P(\rvx \in E_{f'} \cap G_{f}^{c})\\
           & = P( \rvx \in (E_{f} \cup E_{f'}) \cap (G_{f}^{c} \cup G_{f'}^{c})),
       \end{split}
    \end{equation}
    where the last line follows as $E_{f} \cap G_{f'}^{c}$ and $E_{f'} \cap G_{f}^{c}$ are disjoint sets. We note that $G_{f}^{c} \cup G_{f'}^{c}$ is the set of all $\vx \in \mathcal{X}$ except those where $f$ and $f'$ can both be gamed. Since $E_{f} \cup E_{f'} \subseteq G_{f} \cup G_{f'}$, $(E_{f} \cup E_{f'}) \cap (G_{f}^{c} \cup G_{f'}^{c}) = (E_{f} \cup E_{f'}) \cap (G_{f} \cup G_{f'}) \cap (G_{f}^{c} \cup G_{f'}^{c})$. $(G_{f} \cup G_{f'}) \cap (G_{f}^{c} \cup G_{f'}^{c})$ is the set of all $\rvx \in G_{f} \cup G_{f'}$ except those where both $f$ and $f'$ can be gamed. This is precisely the definition of the symmetric difference, $G_{f} \oplus G_{f'}$. Thus
    \begin{equation}
        P( \rvx \in (E_{f} \cup E_{f'}) \cap (G_{f}^{c} \cup G_{f'}^{c})) = P( \rvx \in (E_{f} \cup E_{f'}) \cap (G_{f} \oplus G_{f'}))
    \end{equation}

    To finish our proof we observe that $\rvx \in E_{f}$ implies $\rvx \in G_{f}$. Therefore $\rvx \in E_{f} \land \rvx \in G_{f} \oplus G_{f'}$ implies $\rvx \notin G_{f'}$ and therefore $\rvx \notin E_{f'}$ (and this argumentation holds symmetrically for $f'$). It follows that
    \begin{equation}
        P((E_{f} \cup E_{f'}) \cap (G_{f} \oplus G_{f'})) = P( \rvx \in E_{f} \oplus E_{f'})
    \end{equation}
\end{proof}

With this established we proceed with proving the theorem.

\begin{proof}[Proof of Theorem \ref{thm:randomised_optimality_conditions}]
    Our proof strategy is to show that for $Q=U(\{f,f'\})$, the uniform distribution over $f$ and $f'$, the specified conditions are sufficient for $R_{\Delta_{Q}}(Q) < R_{\Delta_{f}}(f)$. It then follows that $R_{\Delta_{Q^{*}}}(Q^{*}) \leq R_{\Delta_{Q}}(Q) < R_{\Delta_{f}}(f)$.
    
    We begin by decomposing strategic risk of a classifier $f$ (and, symmetrically, $f'$) with respect to a best response $\Delta_{f}$, $R_{\Delta_{f}}(f)$ as
    \begin{equation}
        \begin{split}
         R_{\Delta_{f}}(f) =& R(f) + P(f(\Delta_{f}(\rvx))\neq \ry, f(\rvx)=\ry) - P(f(\Delta_{f}(\rvx))=\ry, f(\rvx) \neq \ry)\\
         =& R(f) + P(\rvx \in G_{f}, \ry=-1) - P(\rvx \in G_{f}, \ry=1)\\
         =& R(f) + P(\rvx \in G_{f} \cap G_{f'}, \ry=-1) + P(\rvx \in G_{f}, \rvx \notin G_{f'}, \ry=-1)\\
         &- P(\rvx \in G_{f} \cap G_{f'}, \ry=1)- P(\rvx \in G_{f}, \rvx \notin G_{f'}, \ry=1).
        \end{split}
    \end{equation}
    This follows from the observation that the strategic risk only changes from clean risk, $R(f)$, in regions where $f$ is vulnerable to gaming. If $\ry=1$ then positive gaming occurs, which reduces the risk. Otherwise the gaming increases the risk. In the final row we use the Law of Total Probability to expand out the definition of $P(\rvx \in G_{f})$ into cases when $\rvx \in G_{f'}$ and $\rvx \notin G_{f'}$.

    By similar reasoning we can decompose the strategic risk of $f$ (and $f'$) with respect to $\Delta_{Q}$, the best response to $Q$ as
    \begin{equation}
      \begin{split}
          R_{\Delta_{Q}}(f) =& R(f) + P(\rvx \in C_{f}, \rvx \notin G_{f'}, \ry=-1) +P(\rvx \in G_{f,f'}, \ry=-1)\\
          &-P(\rvx \in C_{f}, \rvx \notin G_{f'}, \ry=1) - P(\rvx \in G_{f,f'}, \ry=1).
      \end{split}    
    \end{equation}
    We observe that, under the response $\Delta_{Q}$, $f$ is gamed either when it can be gamed simultaneously with $f'$ ($\rvx \in G_{f,f'}$) or otherwise when $f'$ cannot be gamed but $f$ can be gamed cheaply ($\rvx \in C_{f}, \rvx \notin G_{f'}$).

    Putting this into the definition of $R_{\Delta_{Q}}(Q)$ (Equation \ref{eqn:randomised_strategic_risk}) we get
    \begin{equation}
        \begin{split}
            2R_{\Delta_{Q}}(Q) =& R_{\Delta_{Q}}(f) + R_{\Delta_{Q}}(f')\\
            =& R(f)+ R(f') \\
            &+ P(\rvx \in C_{f}, \rvx \notin G_{f'}, \ry=-1) -P(\rvx \in C_{f}, \rvx \notin G_{f'}, \ry=1)\\
            & + P(\rvx \in C_{f'}, \rvx \notin G_{f}, \ry=-1) -P(\rvx \in C_{f'}, \rvx \notin G_{f}, \ry=1)\\
            & + 2P(\rvx \in G_{f,f'}, \ry=-1) - 2P(\rvx \in G_{f,f'}, \ry=1).
        \end{split}
    \end{equation}

    Using the previous decompositions we can now consider $R_{\Delta_{f}}(f) + R_{\Delta_{f'}}(f') - 2R_{\Delta_{Q}}(Q);$
    \begin{equation}
        \begin{split}
            R_{\Delta_{f}}&(f) + R_{\Delta_{f'}}(f) - 2R_{\Delta_{Q}}(Q)\\
            =& P(\rvx \in E_{f}, \rvx \notin G_{f'}, \ry=-1) - P(\rvx \in E_{f}, \rvx \notin G_{f'}, \ry=1)\\
            & + P(\rvx \in E_{f'}, \rvx \notin G_{f}, \ry=-1) -P(\rvx \in E_{f'}, \rvx \notin G_{f}, \ry=1)\\
            & + 2P(\rvx \in (G_{f} \cap G_{f'}) \oplus G_{f,f'}, \ry=-1) - 2P(\rvx \in (G_{f} \cap G_{f'}) \oplus G_{f,f'}, \ry=1).
        \end{split}
        \label{eqn:randomised_vs_determinisitc_strategic_risk_diff}
    \end{equation}
    which follows from the definition of $E_{f}$ (Equation \ref{eqn:expensive_gaming}) and the observation that, since $G_{f,f'} \subseteq G_{f}$ and $G_{f,f'} \subseteq G_{f'}$,
    \begin{equation*}
        P(\rvx \in G_{f} \cap G_{f'})- P(\rvx \in G_{f,f'}) = P(\rvx \in G_{f} \oplus G_{f'}).
    \end{equation*}
    
    From Lemma \ref{lemma:expensive_gaming_union_to_sym_diff}, and noting that
    $N_{f,f^\prime} = \{\vx : \vx \in G_f \cap G_{f^\prime} \land \vx \notin G_{f,f^\prime}\}$, Equation \ref{eqn:randomised_vs_determinisitc_strategic_risk_diff} can be further simplified to
    \begin{equation}
        \begin{split}
            =& P(\rvx \in E_{f} \oplus E_{f'}, \ry=-1) - P(\rvx \in E_{f} \oplus E_{f'}, \ry=1)\\
            & + 2P(\rvx \in N_{f,f'}, \ry=-1) - 2P(\rvx \in N_{f,f'}, \ry=1).
        \end{split}
        \label{equation:theorem_final_statement}
    \end{equation}

    It follows that for Equation \ref{equation:theorem_final_statement} to be strictly positive it is sufficient for $P(\rvx \in E_{f} \oplus E_{f'}, \ry=-1) - P(\rvx \in E_{f} \oplus E_{f'}, \ry=1)\geq 0$ and $P(\rvx \in N_{f,f'}, \ry=-1) - P(\rvx \in N_{f,f'}, \ry=1) \geq 0$ so long as one of the inequalities is strict.
\end{proof}

\section{Proof of Theorem \ref{thm:excess-risk}}
\label{app:excess-risk_proof}
In this section we provide the proof and two supporting Lemmas associated with Theorem \ref{thm:excess-risk}. The first lemma we make use of allows us to take advantage of our specific conditions to exchange an expectation and supremum.

\begin{lemma}
   For a fixed $Q^\prime \in \mathcal{P}(\mathcal{F})$
   \begin{equation}
       \begin{split}
           \E_{S \sim \mathcal{D}^{n}} \left[ \sup_{Q \in \gP(\gF)} R_{\qbr}(Q^\prime) - r_{\qbr}(Q^\prime) \right] =\\
           \sup_{Q \in \gP(\gF)} \E_{S \sim \mathcal{D}^{n}} \left[ R_{\qbr}(Q^\prime) - r_{\qbr}(Q^\prime) \right].
       \end{split}
   \end{equation}
   \label{lemma:sup_exp_swap}
\end{lemma}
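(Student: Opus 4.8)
The plan is to establish the identity by combining the \emph{unbiasedness} of the empirical strategic risk with the crucial structural fact that the best-response map $\Delta_Q$ is a deterministic function of $Q$ and the (known) cost $c$ only, and therefore carries no dependence on the random sample $S$. First I would record this sample-independence directly from the definition of $\Delta_Q$ in Equation~\ref{eqn:randomised_best_response}: once $Q$ is fixed, the pushforward $\rvx \mapsto \Delta_Q(\rvx)$ is determined before any data are drawn. This is the feature that distinguishes the present setting from a generic interchange of supremum and expectation, and it is the property I would want to exploit throughout.

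Given this, the engine of the proof is a routine unbiasedness computation. For a fixed $Q$ and the fixed $Q'$, the terms $l(f(\Delta_Q(\rvx_i)), \ry_i)$ appearing in $r_{\Delta_Q}(Q')$ are i.i.d.\ over $i$ with common mean $\E_{(\rvx,\ry)\sim\gD}[l(f(\Delta_Q(\rvx)),\ry)]$, precisely because $\Delta_Q$ does not depend on $S$. Averaging over $f \sim Q'$ and using linearity of expectation then gives $\E_{S\sim\gD^n}[r_{\Delta_Q}(Q')] = R_{\Delta_Q}(Q')$, so that for each fixed $Q$ the bracketed quantity $R_{\Delta_Q}(Q') - r_{\Delta_Q}(Q')$ is a mean-zero, sample-average deviation of an object whose population value is known in advance. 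This is what makes the left- and right-hand sides commensurable and is the observation I would invoke when reordering $\sup_Q$ and $\E_S$.

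The crux of the argument is the interchange itself. One direction, $\E_S[\sup_Q (\cdot)] \ge \sup_Q \E_S[(\cdot)]$, is automatic and always holds, so all the work lies in the reverse inequality. Here I would lean on the sample-independence established above: since every admissible response $\Delta_Q$ is pinned down by $Q$ and $c$ (never by $S$), and since $\Delta_Q$ enters the population and empirical terms only through the \emph{same} pushforward of $\gD$, the response realising the supremum over $Q$ is governed by $\gD$ and $Q'$ rather than by the particular draw $S$. I would therefore aim to exhibit a maximising response that can be selected independently of $S$, so that $\sup_Q$ may legitimately be pulled outside the expectation. The main obstacle is making this selection rigorous: one must argue that the $Q$ attaining (or approaching) the supremum does not vary with $S$, which typically calls either for a pointwise domination argument over $Q$ or for a measurable-selection / minimax step, together with a check that $\sup_{Q}\bigl(R_{\Delta_Q}(Q') - r_{\Delta_Q}(Q')\bigr)$ is measurable in $S$ so that its expectation is well defined. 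Verifying that the specific structure of the randomised strategic setting genuinely forces the maximiser to be sample-free is the delicate point on which the whole lemma turns.
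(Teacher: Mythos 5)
Your proposal takes the same route as the paper's proof: writing $g(Q,S) = R_{\Delta_Q}(Q^\prime) - r_{\Delta_Q}(Q^\prime)$, the inequality $\E_{S \sim \gD^n}\left[\sup_{Q} g(Q,S)\right] \geq \sup_{Q} \E_{S \sim \gD^n}\left[g(Q,S)\right]$ is noted to be automatic, and the reverse inequality is to be obtained by producing a single maximiser $Q^\ast$ that does not depend on $S$, so that $\sup_Q g(Q,S) = g(Q^\ast,S)$ pointwise in $S$ and hence $\sup_Q \E_S\left[g(Q,S)\right] \geq \E_S\left[g(Q^\ast,S)\right] = \E_S\left[\sup_Q g(Q,S)\right]$. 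This is exactly the paper's argument; the paper justifies the existence of such a $Q^\ast$ in one line, ``by the definition of the best response.''

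The genuine gap is that you never establish that step: you say you would ``aim to exhibit a maximising response that can be selected independently of $S$'' and then flag this as ``the delicate point on which the whole lemma turns.'' That deferred point is not a loose end; it is the entire mathematical content of the lemma, so the proposal as written is a plan rather than a proof. Moreover, the justification you sketch for it does not suffice: the fact that each $\Delta_Q$ is determined by $Q$ and $c$ alone, independently of $S$, does not imply that the $Q$ attaining $\sup_Q g(Q,S)$ is the same for every realisation of $S$ --- $g(Q,S)$ still depends on $S$ through the empirical term $r_{\Delta_Q}(Q^\prime)$, so the arg-max over $Q$ can in principle move with the sample. Your own unbiasedness computation makes clear how strong the required claim is: since $\E_S[g(Q,S)] = 0$ for every fixed $Q$, the right-hand side of the lemma is zero, so the lemma is equivalent to asserting $\E_S\left[\sup_Q g(Q,S)\right] = 0$, i.e.\ that the supremum of this (generally uncountable) family of mean-zero deviations is almost surely attained at one fixed $Q^\ast$. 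Establishing that pointwise domination, $g(Q,S) \leq g(Q^\ast,S)$ for all $Q$ and all $S$ --- which the paper asserts explicitly, however tersely --- is precisely what is missing from your write-up, together with the measurability of $S \mapsto \sup_Q g(Q,S)$ that you mention only in passing.
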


\begin{proof}[Proof of Lemma \ref{lemma:sup_exp_swap}]
   For fixed $Q^\prime$, let $g(Q,S) = R_{\qbr}(Q^\prime) - r_{\qbr}(Q^\prime)$. From the definition of $R_{\Delta_{Q}}$ and $r_{\Delta_{Q}}$, it can be concluded that $g$ is a bounded and measurable function.
   It is already known that
   \begin{equation}
    \sup_{Q \in \gP(\gF)} \E_{S \sim \mathcal{D}^{n}} \left[ g(Q,S) \right] \leq \E_{S \sim \mathcal{D}^{n}} \left[ \sup_{Q \in \gP(\gF)} g(Q,S) \right].
    \end{equation}
    We will prove equality by demonstrating that the opposite inequality is also true. That is,
   \begin{equation}
       \E_{S \sim \mathcal{D}^{n}} \left[ \sup_{Q \in \gP(\gF)} g(Q,S) \right] \leq \sup_{Q \in \gP(\gF)} \E_{S \sim \mathcal{D}^{n}} \left[ g(Q,S) \right]
   \end{equation}
   By the definition of the best response, for fixed $Q^\prime \in \mathcal{P}(\mathcal{F})$ there exists $Q^{\ast} \in \mathcal{P}(\mathcal{F})$ such that $g(Q,S) \leq g(Q^{\ast},S), \; \forall S \subset (\mathcal{X} \times \mathcal{Y})^n, \;\forall Q \in \mathcal{P}(\mathcal{F})$. Therefore,
   \begin{equation}
       \sup_{Q \in \gP(\gF)} g(Q,S) = g(Q^{\ast},S) 
   \end{equation}
   and, as a result of this it follows that
   \begin{equation}
       \begin{split}
       \sup_{Q \in \gP(\gF)} \E_{S \sim \mathcal{D}^{n}} \left[ g(Q,S) \right] &\geq \E_{S \sim \mathcal{D}^{n}} \left[ g(Q^{\ast},S) \right] \\
       &= \E_{S \sim \mathcal{D}^{n}} \left[ \sup_{Q \in \gP(\gF)} g(Q,S) \right]
       \end{split}
   \end{equation}
   as required.
\end{proof}

 The second lemma allows us to reason about the Rademacher complexity of the class of deterministic classifiers rather than the class of randomised classifiers.

\begin{lemma}
    For a fixed $\Delta : \gX \to \gX$, we have that
    \begin{equation*}
        \Rad_n(\tilde{\gF}^l_\Delta) = \Rad_n(\gF^l_\Delta).
    \end{equation*}
    \label{lemma:convex_hull_complexity_bound}
\end{lemma}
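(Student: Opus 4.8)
The plan is to recognise that the randomised loss class $\tilde{\gF}^l_\Delta$ is exactly the set of (generalised) convex combinations of the deterministic loss class $\gF^l_\Delta$: each of its members has the form $(\vx,\ry)\mapsto \E_{f\sim Q}[l(f(\Delta(\vx)),\ry)]$, which is a $Q$-average of members of $\gF^l_\Delta$. Since Rademacher complexity is insensitive to passing to a convex hull, I expect the two complexities to coincide. Concretely, I would fix a realisation of the Rademacher vector $\rvsigma$ and of the sample $\rvz_{1:n}=(\rvx_i,\ry_i)_{i=1}^n$, and show that the inner supremum defining $\Rad_n(\tilde{\gF}^l_\Delta)$ equals the inner supremum defining $\Rad_n(\gF^l_\Delta)$; taking the outer expectation over $\rvsigma$ and the sample then yields the claim.

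For the inner equality, I would first use linearity of the finite sum and Fubini (justified because the zero--one loss is bounded, so $\phi(f) := \frac{1}{n}\sum_{i=1}^n \ervsigma_i\, l(f(\Delta(\rvx_i)),\ry_i)$ is bounded and $Q$-integrable) to move the $Q$-expectation outside the sum:
\[
\frac{1}{n}\sum_{i=1}^n \ervsigma_i \E_{f\sim Q}[l(f(\Delta(\rvx_i)),\ry_i)] = \E_{f\sim Q}[\phi(f)].
\]
Then I would argue that $\sup_{Q\in\gP(\gF)} \E_{f\sim Q}[\phi(f)] = \sup_{f\in\gF}\phi(f)$. The direction $\leq$ holds because an average never exceeds a pointwise supremum, i.e. $\E_{f\sim Q}[\phi(f)]\leq \sup_{f}\phi(f)$ for every $Q$. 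The reverse direction $\geq$ holds because every point mass $\delta_{f_0}$ (for $f_0\in\gF$) lies in $\gP(\gF)$ and gives $\E_{f\sim \delta_{f_0}}[\phi(f)] = \phi(f_0)$, so the left-hand supremum dominates each $\phi(f_0)$ and hence their supremum. This identifies the inner supremum for $\tilde{\gF}^l_\Delta$ with that for $\gF^l_\Delta$.

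Taking $\E_{\rvz_{1:n}}\E_{\rvsigma}[\cdot]$ of both sides then gives $\Rad_n(\tilde{\gF}^l_\Delta)=\Rad_n(\gF^l_\Delta)$; the easy inclusion $\gF^l_\Delta\subseteq\tilde{\gF}^l_\Delta$ via point masses already yields the $\geq$ half and can be noted as a sanity check. The main point requiring care is a \emph{subtlety rather than a difficulty}: I must handle arbitrary (possibly non-finitely-supported) measures $Q\in\gP(\gF)$, together with the measurability and integrability of $f\mapsto l(f(\Delta(\rvx_i)),\ry_i)$ and of the supremum inside the expectation. Boundedness of the zero--one loss makes both the expectation--sum swap and the averaging bound valid, so no hypotheses beyond those already in force are required.
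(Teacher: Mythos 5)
Your proposal is correct and follows essentially the same route as the paper's proof: swap the $Q$-expectation with the finite sum, bound the $Q$-average by the pointwise supremum over $\gF$ for one direction, and use point masses (equivalently, the inclusion $\gF^l_\Delta \subseteq \tilde{\gF}^l_\Delta$ together with monotonicity of Rademacher complexity) for the other. The only cosmetic difference is that you establish the equality of the inner suprema pointwise in $(\rvsigma, \rvz_{1:n})$ before taking expectations, whereas the paper argues the two inequalities at the level of the complexities themselves; the underlying ideas are identical.
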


\begin{proof}[Proof of Lemma \ref{lemma:convex_hull_complexity_bound}]
We prove the equality by showing that both
\begin{equation}
    \Rad_n(\tilde{\gF}^l_\Delta) \leq \Rad_n(\gF^l_\Delta)
\end{equation}
and
\begin{equation}
    \Rad_n(\gF^l_\Delta) \leq \Rad_n(\tilde{\gF}^l_\Delta)
\end{equation}
are true.

We obtain the first inequality via
\begin{equation}
    \begin{split}
        &n\Rad_{n}(\tilde{\gF}^{l}_{\Delta}) \\ &=\E_{\rvz_{1:n}}\E_{\rvsigma} \left[ \sup_{Q \in \mathcal{P}(\mathcal{F})} \sum_{i=1}^{n} \sigma_{i} \E_{f \sim Q}\left[ l(f(\Delta(\rvz_{i})) \right]\right]\\
        &= \E_{\rvz_{1:n}}\E_{\sigma} \left[ \sup_{Q} \E_{f \sim Q} \left[ \sum_{i=1}^{n} \sigma_{i} l(f(\Delta(\rvx_{i}), \ry_i) \right] \right]\\
        &\leq \E_{\rvz_{1:n}}\E_{\sigma} \left[ \sup_{Q} \E_{f \sim Q} \left[ \sup_{f^\prime \in \gF} \sum_{i=1}^{n} \sigma_{i} l(f^\prime(\Delta(\rvx_{i}), \ry_i) \right] \right]\\
        &= \E_{\rvz_{1:n}}\E_{\sigma} \left[ \sup_{f \in \mathcal{F}} \sum_{i=1}^{n}\sigma_{i} l(f(\Delta(\rvx_{i}), \ry_i) \right]\\
        &= n\Rad_{n}(\mathcal{F}^{l}_{\Delta}).
    \end{split}
\end{equation}
The second inequality follows from $\gF^l_\Delta \subseteq \tilde{\gF}^l_\Delta$, because the latter contains a point mass distribution associated with each element of $\gF^l_\Delta$, and $A \subseteq B \implies \Rad_{n}(A) \leq \Rad_{n}(B)$ \citep{bartlett2002rademacher}.
\end{proof}

We now prove Theorem \ref{thm:excess-risk}.

\excessrisk*

\begin{proof}[Proof of Theorem \ref{thm:excess-risk}]
    We begin by expanding out the excess risk term by introducing $r_{\Delta_{\hat{Q}}}(\hat{Q})$ and using the independence of $Q^{\ast}$ from $S$, and to rewrite it as
    \begin{equation}
        \begin{split}
            &\E_{S \sim \mathcal{D}^{n}} \left[ R_{\qhsbr}(\qhs) - R_{\qabr}(\qa) \right] \\
            &= \E_{S \sim \mathcal{D}^{n}} \left[ R_{\qhsbr}(\qhs) - r_{\qhsbr}(\qhs) + r_{\qhsbr}(\qhs) - R_{\qabr}(\qa) \right] \\
            &= \E_{S \sim \mathcal{D}^{n}} \left[ R_{\qhsbr}(\qhs) - r_{\qhsbr}(\qhs) + r_{\qhsbr}(\qhs) - r_{\qabr}(\qa) \right].
        \end{split}
    \end{equation}
    Next we observe that, since $\hat{Q}$ is a minimiser for the empirical strategic risk, we have that
    \begin{equation}
        \forall Q \in \gP(\gF), \;r_{\Delta_{\hat{Q}}}(\hat{Q}) \leq r_{\Delta_{Q}}(Q).
    \end{equation}
    This tells us that $r_{\qhsbr}(\qhs) - r_{\qabr}(\qa) \leq 0$. We can upper bound the remaining terms with a response, $\Delta_{Q}$, that induces the largest generalisation gap,
    \begin{equation}
        \begin{split}
            &\E_{S \sim \mathcal{D}^{n}} \left[ R_{\qhsbr}(\qhs) - r_{\qhsbr}(\qhs) \right] \\
            &\leq \E_{S \sim \mathcal{D}^{n}} \left[ \sup_{Q \in \gP(\gF)} R_{\qbr}(\qhs) - r_{\qbr}(\qhs) \right] \\
            &= \sup_{Q \in \gP(\gF)} \E_{S \sim \mathcal{D}^{n}} \left[ R_{\qbr}(\qhs) - r_{\qbr}(\qhs) \right] \\
            &\leq \sup_{Q \in \gP(\gF)} 2\Rad_n(\tilde{\gF}^l_{\Delta_Q}) \\
            &= \sup_{Q \in \gP(\gF)} 2\Rad_n(\gF^l_{\Delta_Q}),
        \end{split}
    \end{equation}
    where the first equality is due to Lemma \ref{lemma:sup_exp_swap}, the second inequality is due to Theorem \ref{thm:rademacher-iid}, and the final equality is due to Lemma \ref{lemma:convex_hull_complexity_bound}.
\end{proof}

\end{document}